\newtheorem{lemma}{Lemma}
\newlength\savewidth
\newcolumntype{C}{>{\centering\arraybackslash} m{3.0cm} }
\newcolumntype{Z}{>{\centering\arraybackslash} m{3.5cm} }
\newcolumntype{Q}{>{\centering\arraybackslash} m{7.5cm} }
\newcolumntype{Y}{>{\centering\arraybackslash} m{4.5cm} }
\newcolumntype{D}{>{\centering\arraybackslash} m{2.5cm} }
\newcolumntype{X}{>{\centering\arraybackslash} m{6.5cm} }
\newcolumntype{W}{>{\centering\arraybackslash} m{12.0cm} }
\newcolumntype{B}{>{\centering\arraybackslash} m{5.0cm} }
\icmltitlerunning{Towards Robust CNNs Using SAFs}
\begin{document} 

\twocolumn[
\icmltitle{Suppressing the Unusual: towards Robust CNNs using Symmetric Activation Functions}

\icmlauthor{Qiyang Zhao}{zhaoqy@buaa.edu.cn}
\icmladdress{Beihang University, 37 Xueyuan Rd., Beijing 100191, China}
\icmlauthor{Lewis D Griffin}{l.griffin@cs.ucl.ac.uk}
\icmladdress{University College London, Gower St., London, WC1E 6BT, UK}

\icmlkeywords{adversarial samples; nonsense samples; SAF; CNN}

\vskip 0.3in
]

\begin{abstract} 
Many deep Convolutional Neural Networks (CNN) make incorrect predictions on adversarial samples obtained by imperceptible perturbations of clean samples. We hypothesize that this is caused by a failure to suppress unusual signals within network layers. As remedy we propose the use of Symmetric Activation Functions (SAF) in non-linear signal transducer units. These units suppress signals of exceptional magnitude. We prove that SAF networks can perform classification tasks to arbitrary precision in a simplified situation. In practice, rather than use SAFs alone, we add them into CNNs to improve their robustness. The modified CNNs can be easily trained using popular strategies with the moderate training load. Our experiments on MNIST and CIFAR-10 show that the modified CNNs perform similarly to plain ones on clean samples, and are remarkably more robust against adversarial and nonsense samples.
\end{abstract} 

\section{Introduction}

Although deep CNNs have delivered state-of-the-art performance on several major computer vision challenges \cite{Krizhevsky}\cite{He}, they have some pooly understood aspects. In particular, it was shown in \cite{Szegedy} that one can easily construct an \emph{adversarial sample} by imperceptible perturbation of any \emph{clean sample} using the 
box-constrained limited-memory BFGS method \cite{Andrew}. These samples are perceptually similar to the original ones, but are predicted to be of different categories, and even generalize well across different CNN models and train sets \cite{Szegedy}. In \cite{Goodfellow}, a simpler method, fast gradient sign (FGS), is proposed to compute adversarial samples effectively. Recently, a more complicated method is proposed to construct an adversarial sample with both the prediction and internal CNN features being similar to another arbitrary sample \cite{Sabour}. \emph{Nonsense samples} (called rubbish samples in \cite{Goodfellow}) are another challenge to the robustness of CNNs. These are generated from noise or arbitrary images using FGS-like methods \cite{Goodfellow}\cite{Nguyen}. Nonsense samples are totally unrecognizable but popular CNNs typically predict with high confidence that they belong to some category. All these references argue that adversarial and nonsense samples arise from some unknown flaw of popular deep CNNs, either in their structure or training methods \cite{Szegedy}\cite{Goodfellow}\cite{Sabour}.

\begin{figure}[htbp]
	\centering
	\begin{tabular}{ZZ}
		\includegraphics[width=3.30cm]{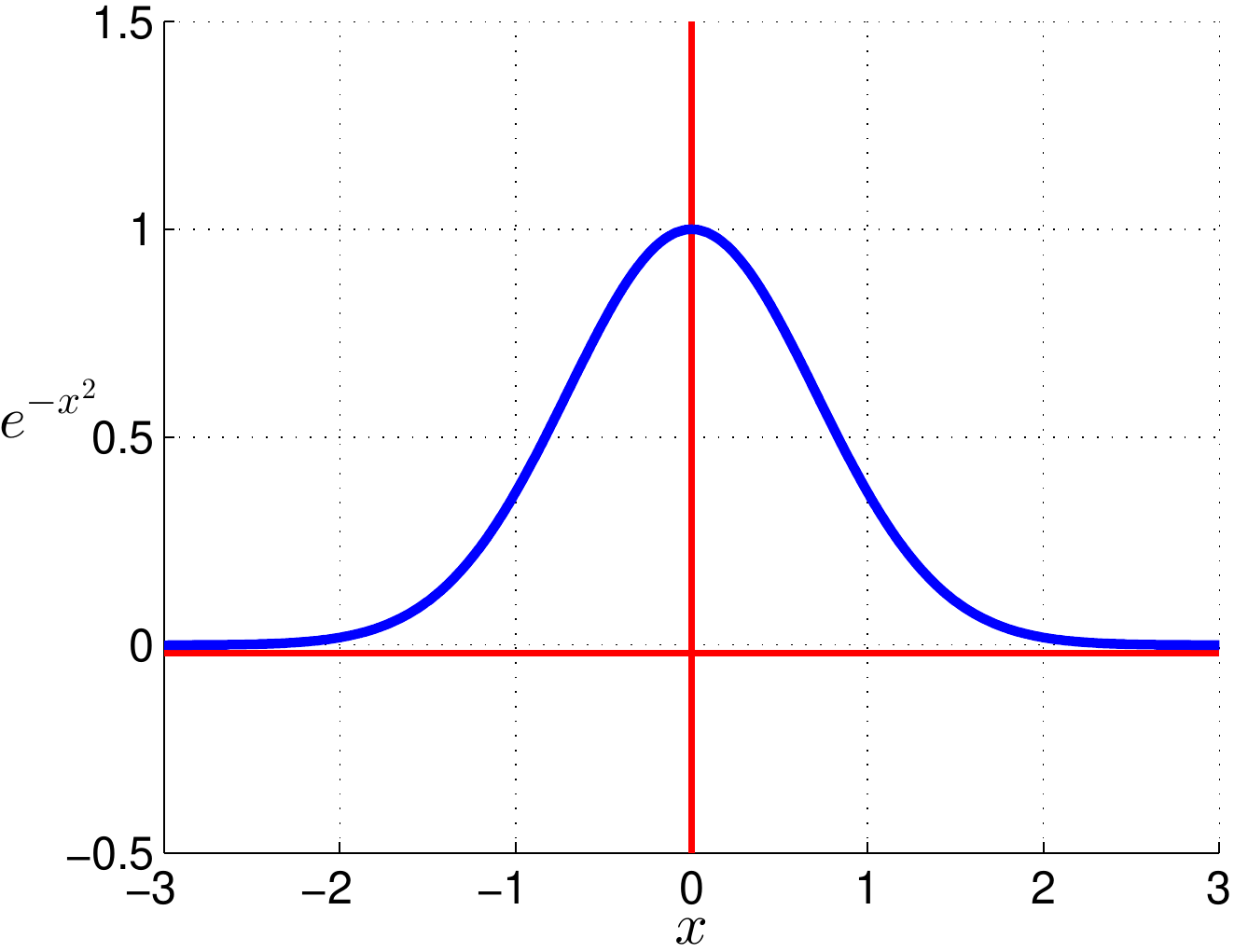}&
		\includegraphics[width=3.40cm]{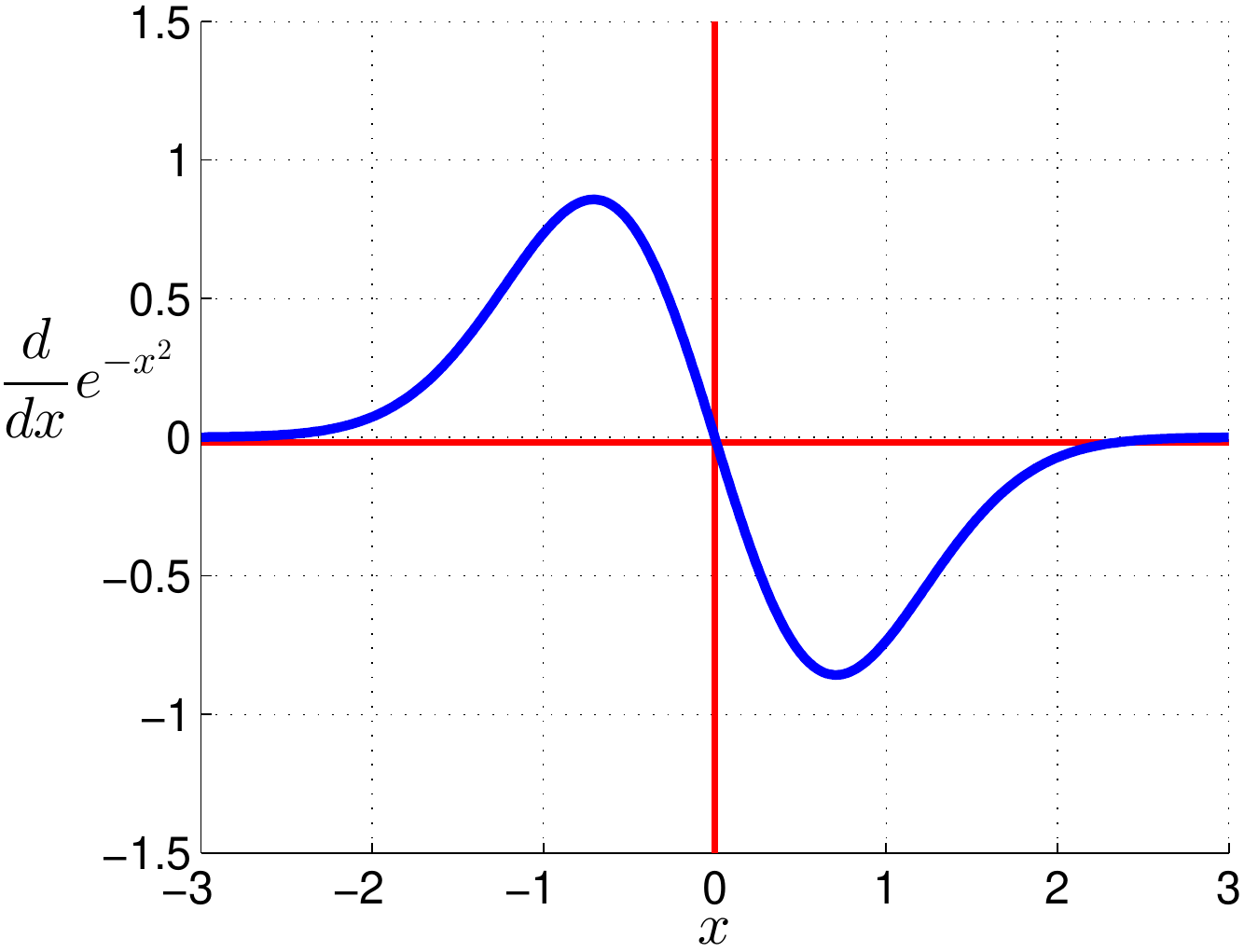}\\
		\ \ \ \ (a) & \ \ \ \ (b) \\
		\includegraphics[width=3.30cm]{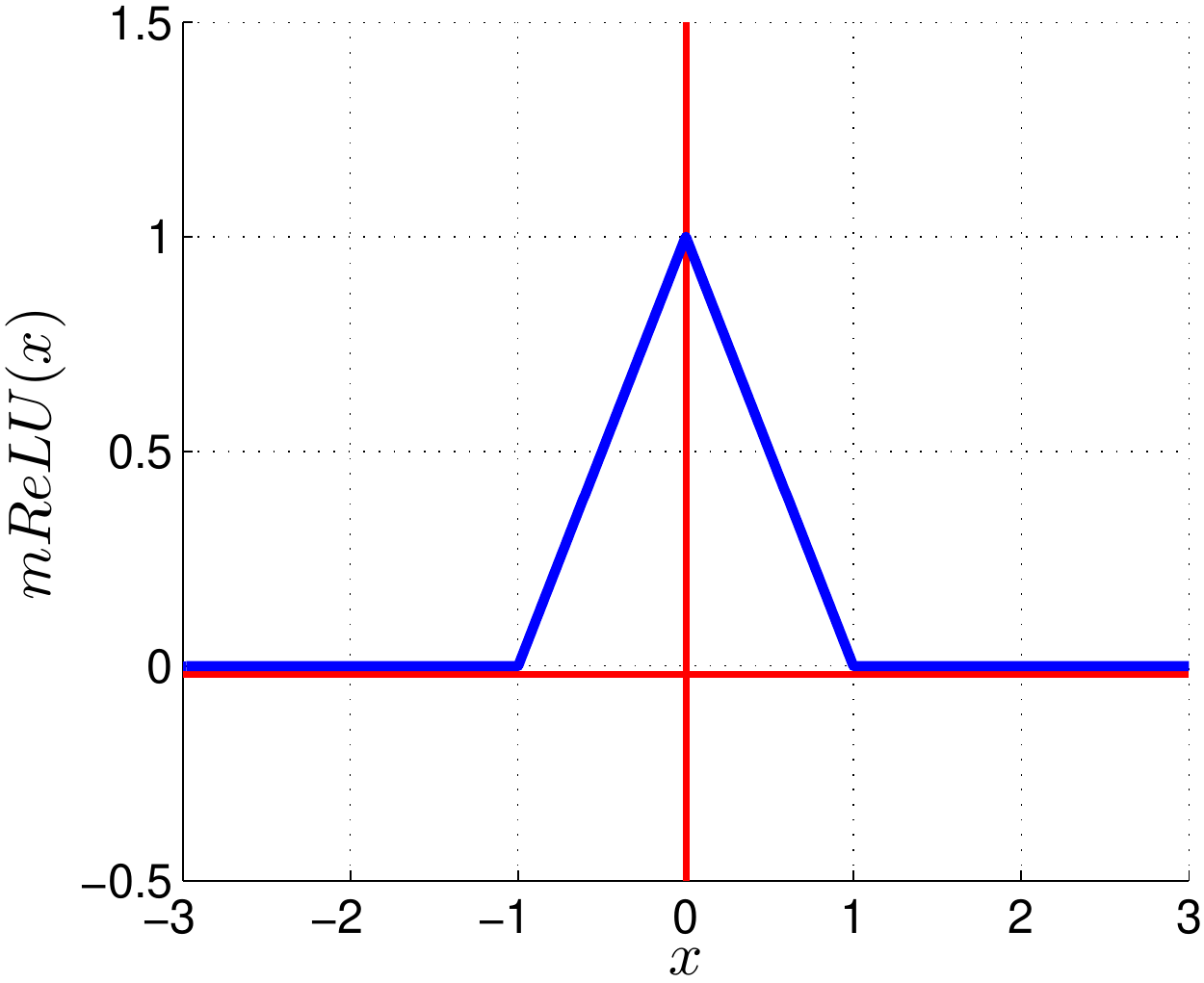}&
		\includegraphics[width=3.50cm]{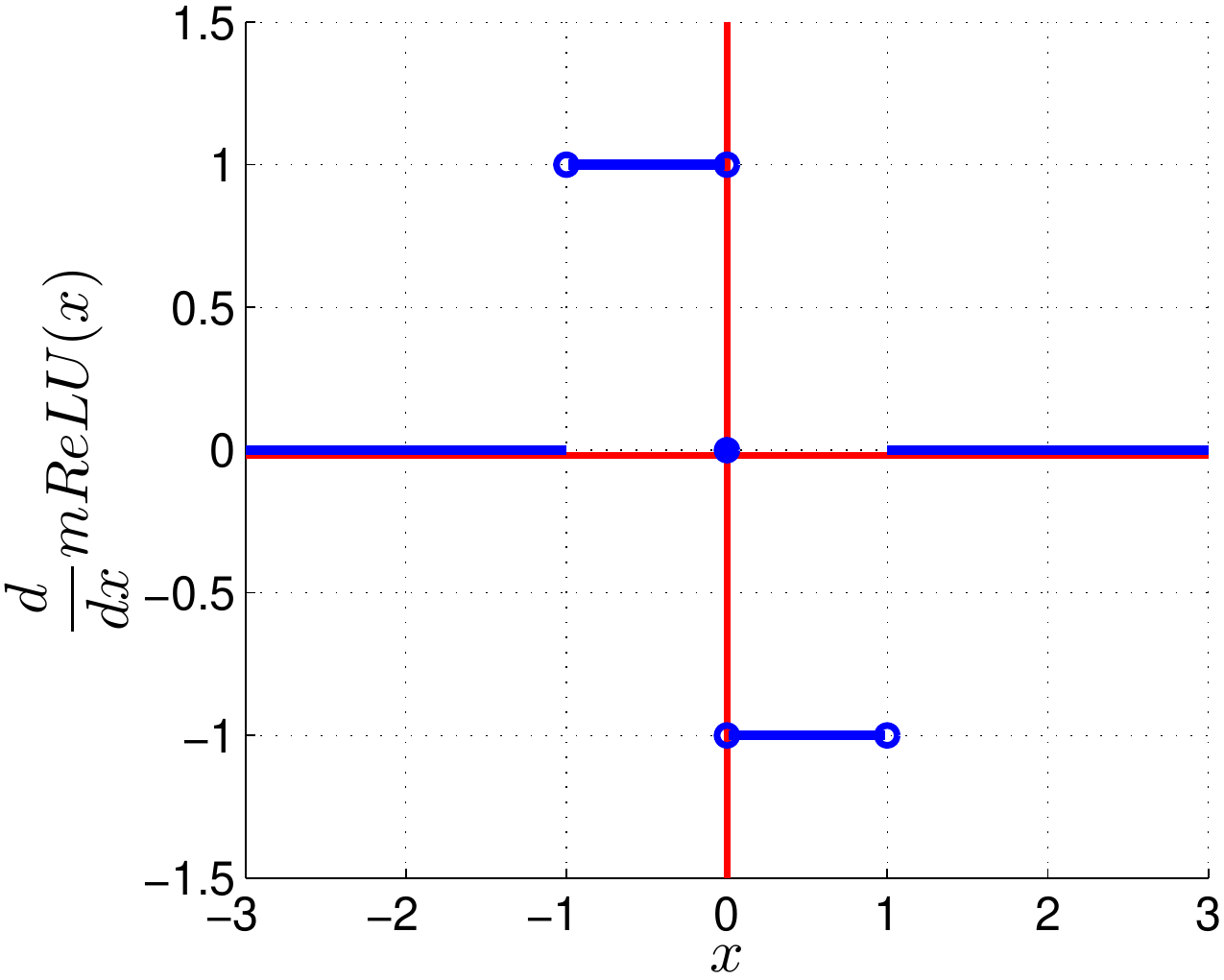}\\
		\ \ \ \ (c) & \ \ \ \ \ \ (d) \\
	\end{tabular}
	\caption{Symmetric activation functions and their derivatives . (a)(b) 1-D RBF and its derivative; (c)(d) mReLU and its derivative. Please see Sec. \ref{sec:saf} for details.}
	\label{fig:symfunc_der}
\end{figure}

\begin{figure}[htbp]
	\centering
	\begin{tabular}{Z|Z}
		\includegraphics[width=3.0cm]{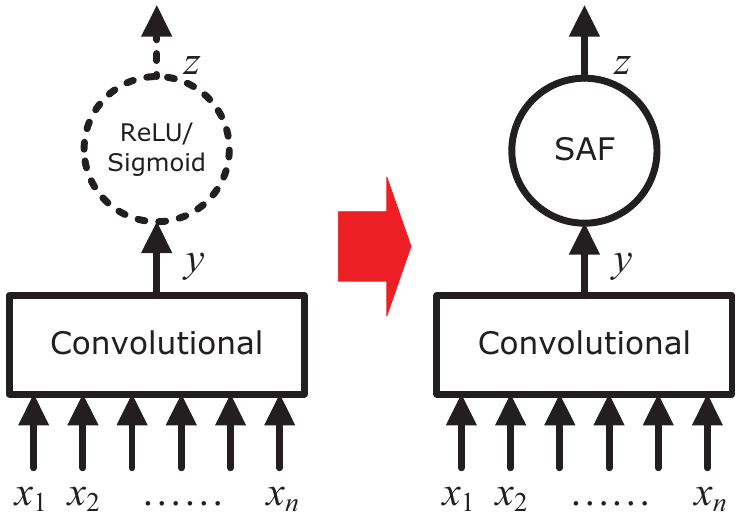}&
		\includegraphics[width=2.3cm]{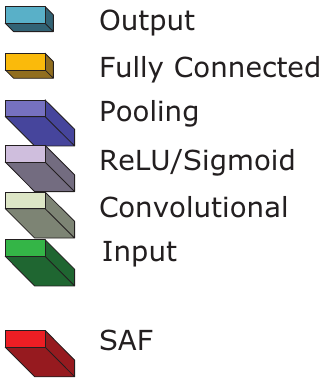}\\
		(a) & \\
		\includegraphics[width=3.0cm]{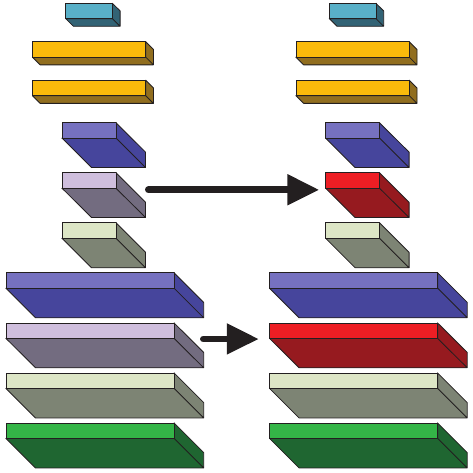}&
		\includegraphics[width=3.0cm]{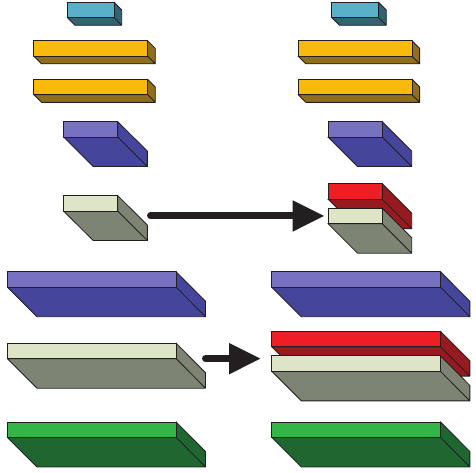}\\
		(b) & (c)
	\end{tabular}
	\caption{Building robust CNNs using SAF units. (a) the basic block of a SAF unit. (b) replacing ReLU/sigmoid layers with SAF layers. (c) inserting SAF layers into a CNN.}
	\label{fig:building_RobustCNN}
\end{figure}

A proposed solution to the problem is \emph{adversarial training} \cite{Szegedy}\cite{Goodfellow}: adversarial samples are constructed according to the current model, then utilized in subsequent training epochs. CNNs so trained will be vulnerable to new adversarial samples; but repeated rounds of adversarial training eventually lead to improved robustness, at the price of an increased training load. Unfortunately, the final error rates on the newest adversarial samples, after this scheme, remain large \cite{Goodfellow}. It is interesting that adversarial training can be understood as a new regularization method for training CNNs; for example in \cite{Miyato}, virtual adversarial samples are constructed to force the model distributions of a supervised learning process to be smooth. Adversarial samples can also be used to establish new unsupervised and/or semi-supervised representation learning methods \cite{Radford} \cite{Springenberg} or autoencoders \cite{Makhzani}.

A different approach to the adversary problem is to modify the structure of the CNN. Autoencoders can be added to denoise input samples and improve the local stability of feature spaces in individual layers \cite{Gu}, where the contractive penalty terms are designed to punish unexpected feature changes caused by small perturbations. The idea is similar to \cite{Miyato}. However this method cannot achieve good robustness on moderately large perturbations while keeping high accuracies on clean samples \cite{Gu}. Another interesting idea, proposed in \cite{Chalupka}, is to identify features that are causally related, rather than merely correlated, with the categories. It is thus possible to predict the semantic categories according to causal features, so to improve the robustness. However it would be a large burden to integrate this idea with popular CNN models.

In this paper we propose a solution to the adversary problem that does not increase much training load, nor increase the complexity of the network. The solution is to use symmetric activation function (SAF) units. These are non-linear transduction units that suppress exceptional signals. We incorporate SAFs into CNNs by addition or replacement of layers of standard models. The resulting CNNs are easily trained using standard approaches and give low error rates on clean and perturbed samples.

\section{Motivation}
\label{sec:motivation}

Unlike CNNs and linear classification models, local models \cite{Moody}\cite{Jacobs} based on radial basis functions (RBF) \cite{Broomhead} are intrinsically immune to abnormal samples which are far from high-density regions: they will always output low confidence scores for such. Furthermore, the RBFs are robust against tiny perturbations of the input. This suggests the possibility of improving the robustness of CNNs by using RBF units.

RBFs have been used in several ways within networks, e.g. as a hidden layer in mixture of experts \cite{Jacobs}, and as the top layer units in LeNet-5 \cite{LeCun}.
Only a few implementations attempt multiple RBF layers \cite{Craddock}\cite{Bohte}\cite{Mrazova}. The reason for avoiding multiple layers is the high-dimensionality of the parameter space of each RBF unit, together with the need for many RBF units. This high-dimensionality makes it hard to train deep networks involving RBF units to achieve acceptable accuracies \cite{Goodfellow}. Some have argued that more powerful optimization methods are needed to make this work \cite{Goodfellow}.

We have considered whether it is feasible to use 1-D RBFs instead of high-dimension ones. To assess this, we run a LeNet-5 model \cite{LeCun} (see Sec. \ref{sec:exp_mnist} for its details) on the MNIST test set, and show the histograms and joint distributions of features from the second convolutional layer in Fig. \ref{fig:dist_mnist_plainLeNet}. The empirical distributions of these features are approximately Gaussian. In our experiments, we find it is effective to distinguish abnormal samples from normal ones by exploiting this observation. In the second row of Fig. \ref{fig:dist_mnist_plainLeNet}, we show the feature pair values of nonsense samples of pure Gaussian noise in red squares. Although the LeNet-5 model incorrectly assigns nonsense samples to meaningful categories with high confidence ($\sim \! 99\%$), as the figure shows, their features usually deviate from the high-density regions of clean samples. The higher the layer the more distinct the deviation.

We also observe that features are not strongly correlated, as shown in Fig. \ref{fig:corr_mnist_plainLeNet}. Therefore it is viable to use 1-D Gaussian distributions instead of high-dimension ones to model the feature statistics of clean samples. Such 1-D RBFs can be used within networks, as multi-dimensional RBFs have been. The potential advantage is obvious: it may be possible to use much fewer 1-D RBF units than high-dimensional ones, and therefore with fewer parameters to train. It should be pointed out that we do not aim to approximate a traditional RBF network of only a single hidden layer using 1-D RBFs. Instead, we propose to integrate 1-D RBF units with popular CNN models. The integration is neither a pure `local' model such as RBF networks nor a pure `distributed' model such as CNNs. It models diverse features `locally' with 1-D RBFs but combines them in a `distributed' way to give the final representation.

\begin{figure}[htbp]
	\centering
	\begin{tabular}{ZZ}
		\includegraphics[width=3.0cm]{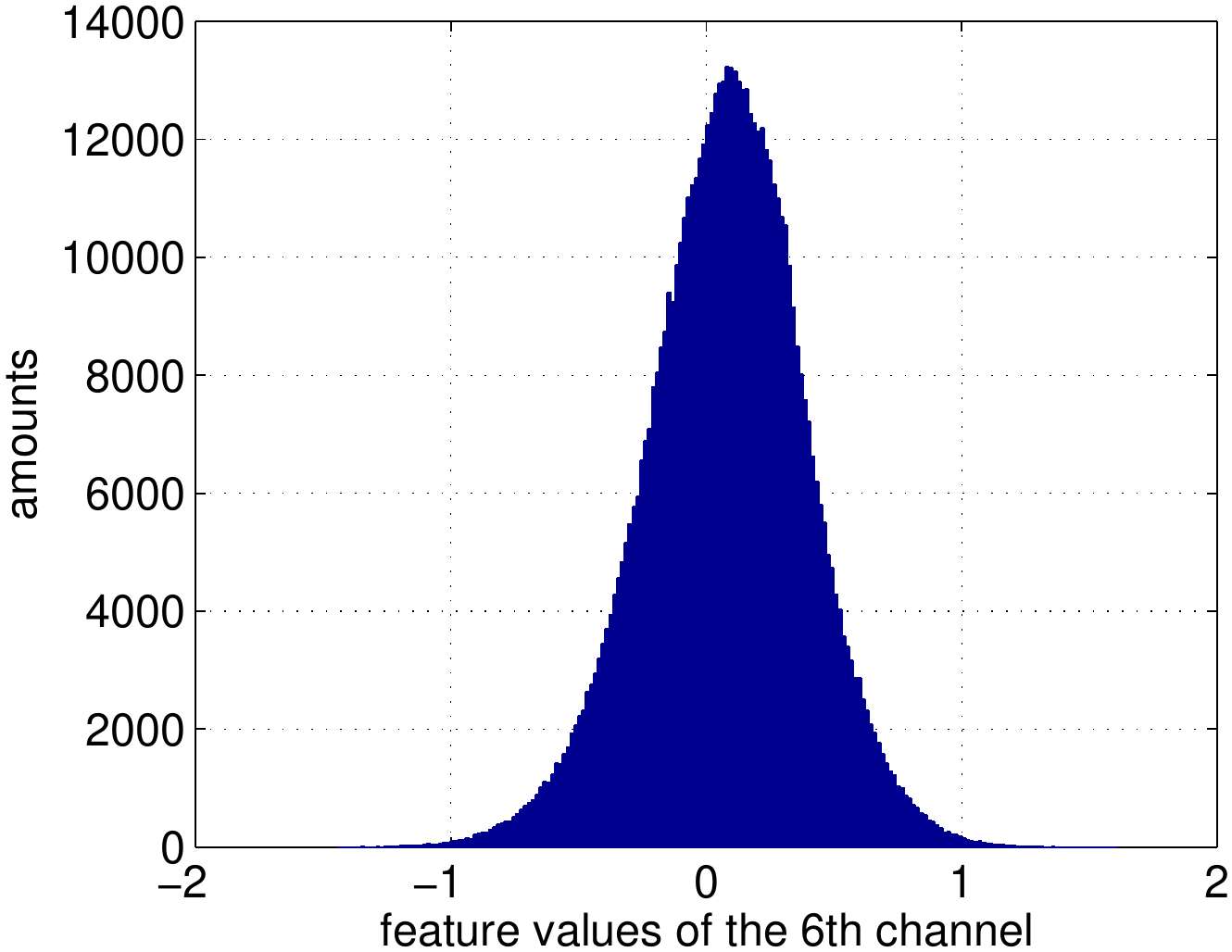}&
		\includegraphics[width=3.0cm]{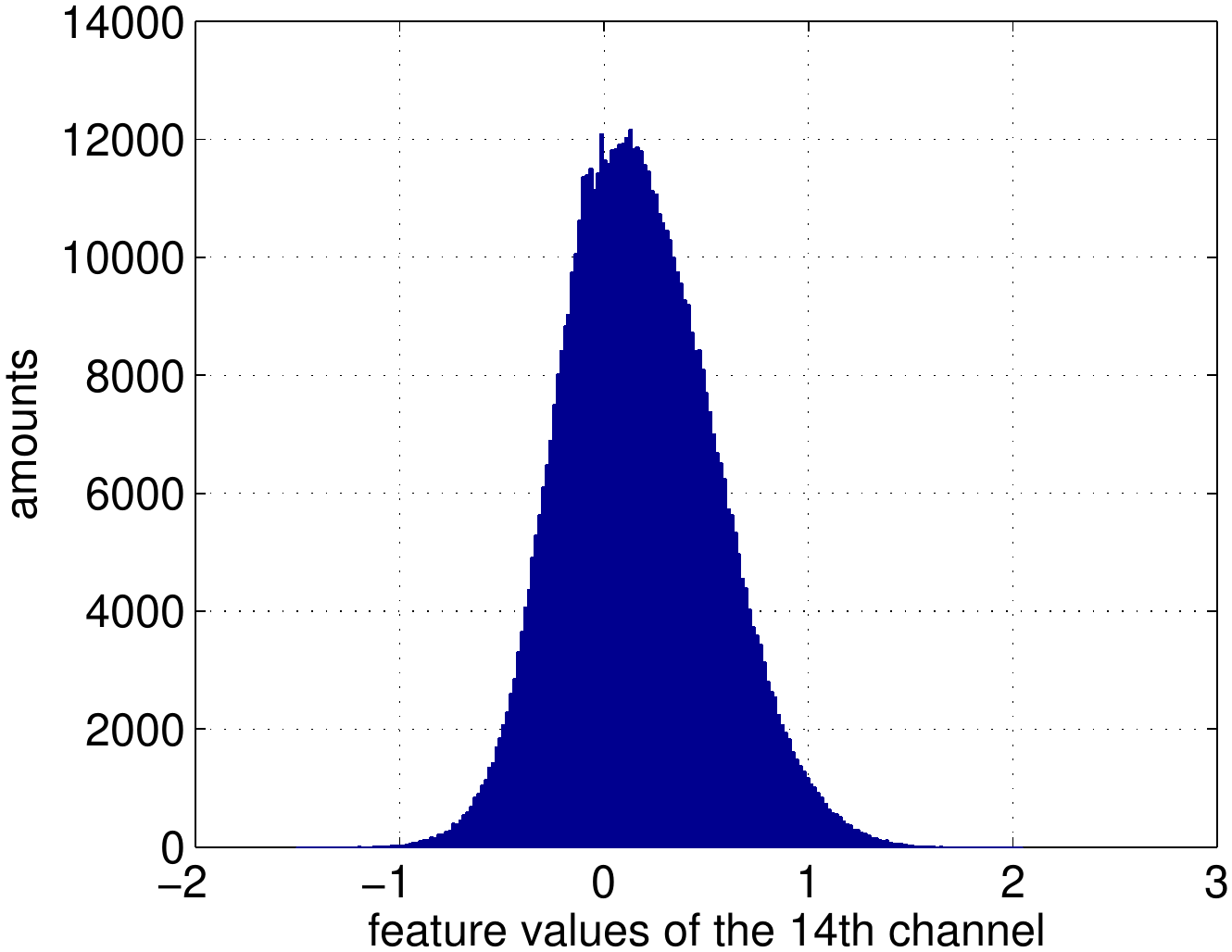}\\
		\includegraphics[width=3.0cm]{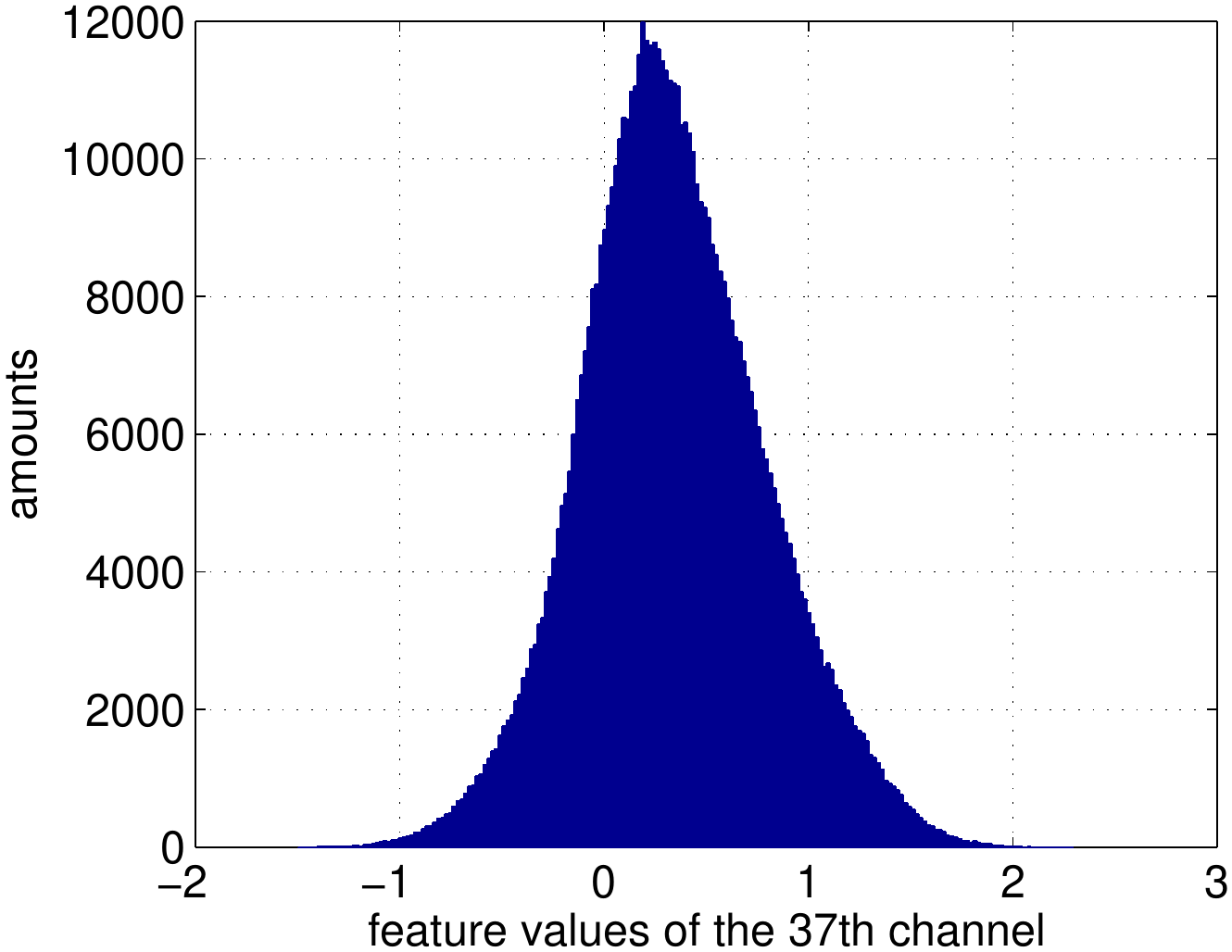}&
		\includegraphics[width=3.0cm]{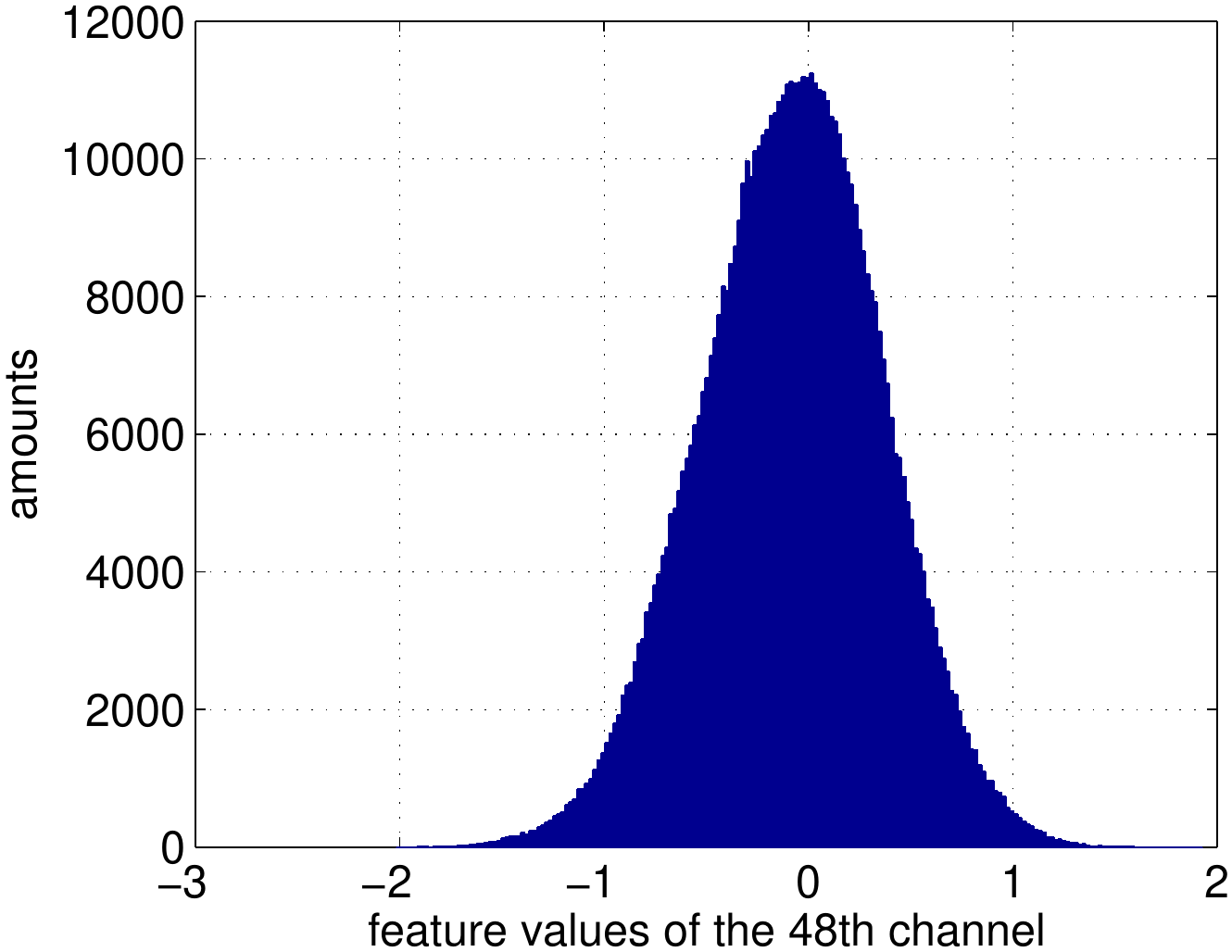}\\
		\includegraphics[width=3.0cm]{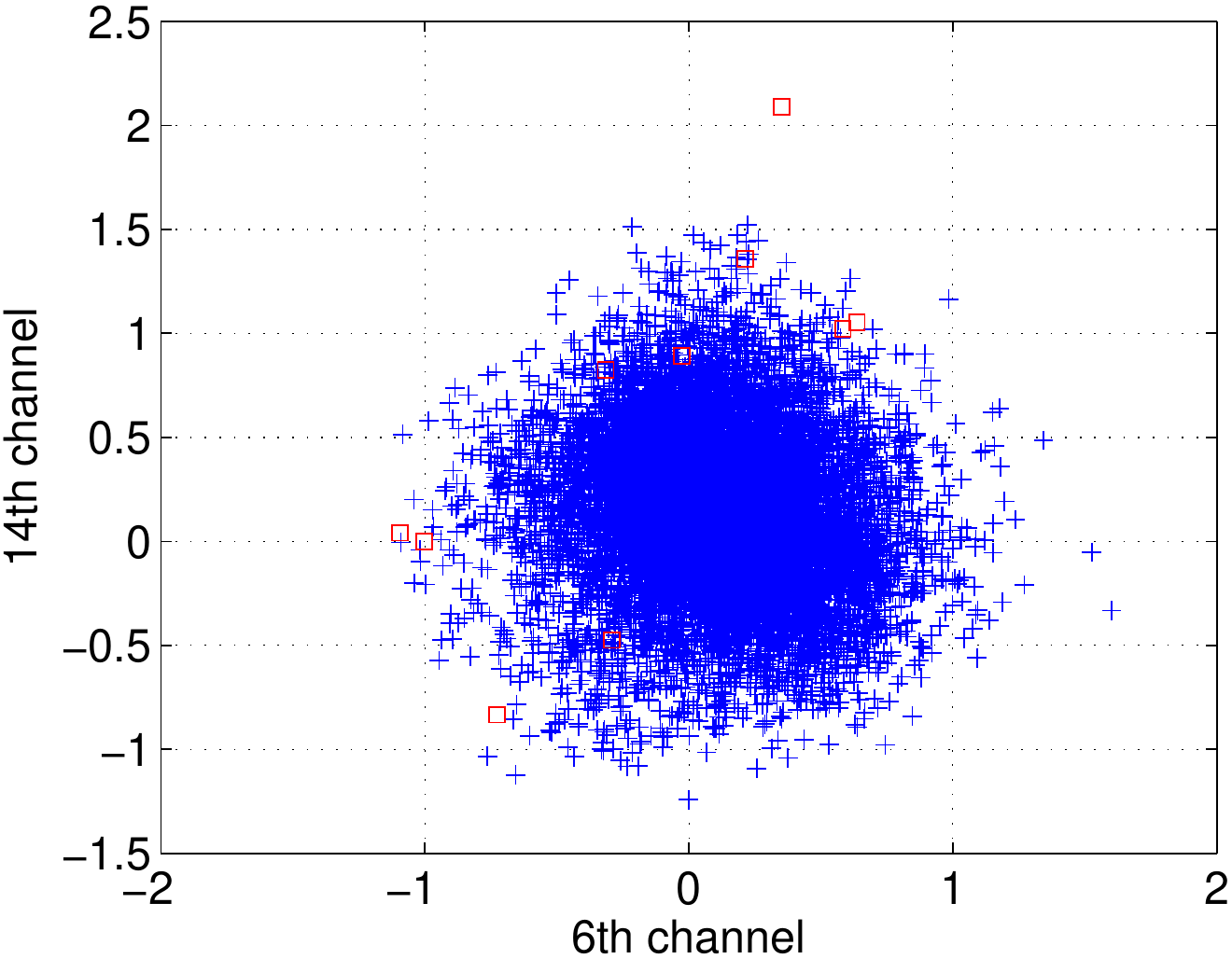}&
		\includegraphics[width=3.0cm]{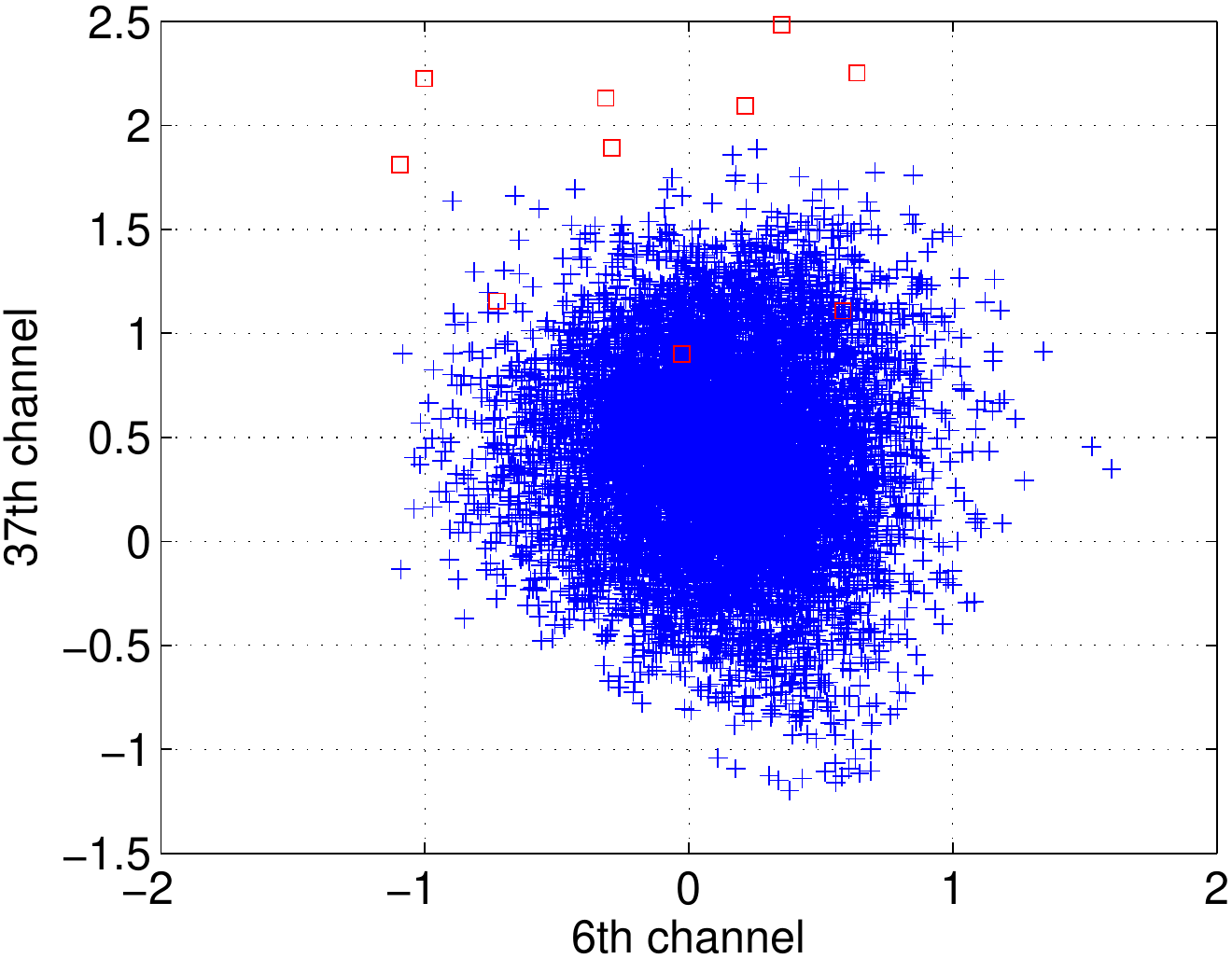}\\
		\ \includegraphics[width=2.90cm]{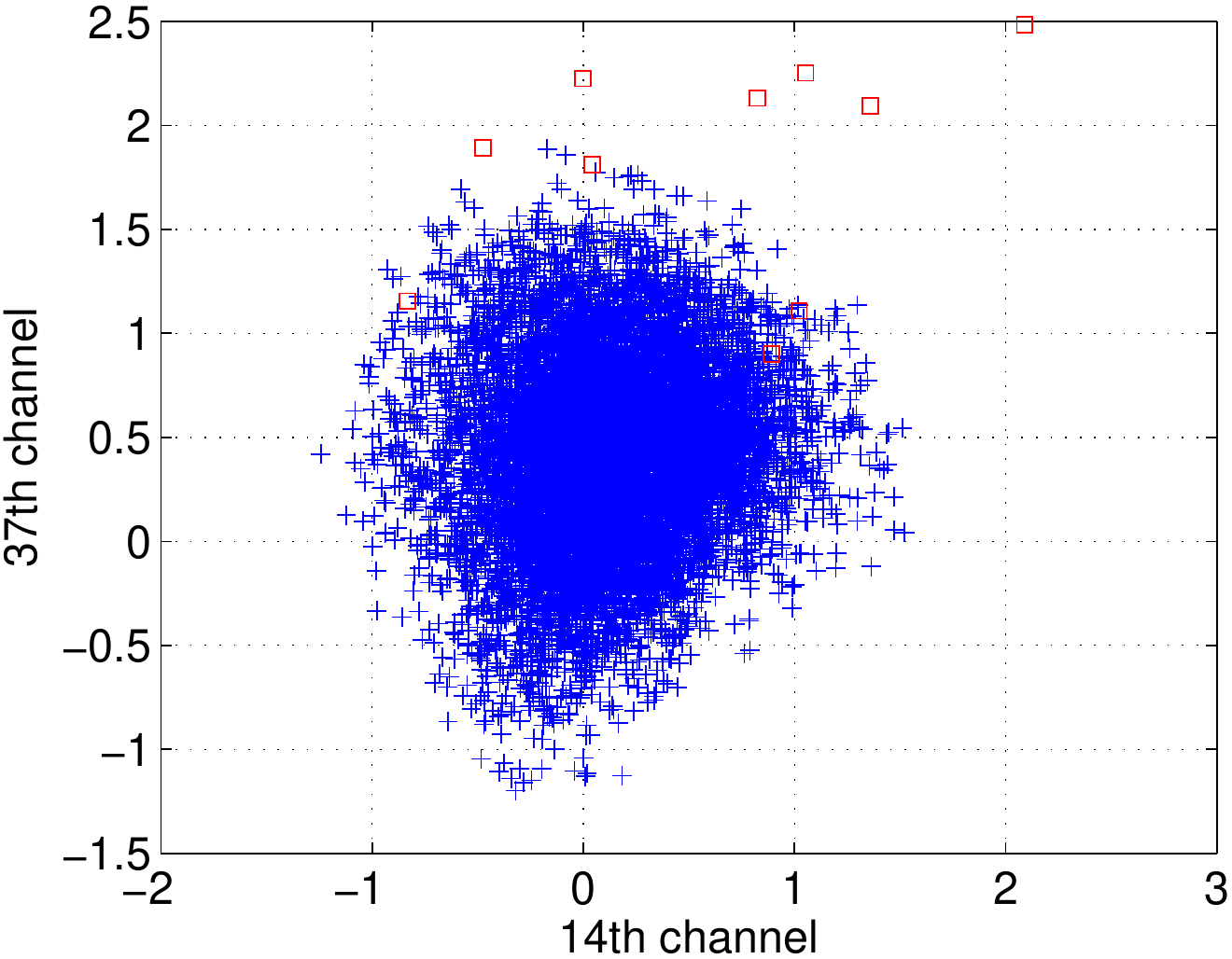}&
		\ \includegraphics[width=2.90cm]{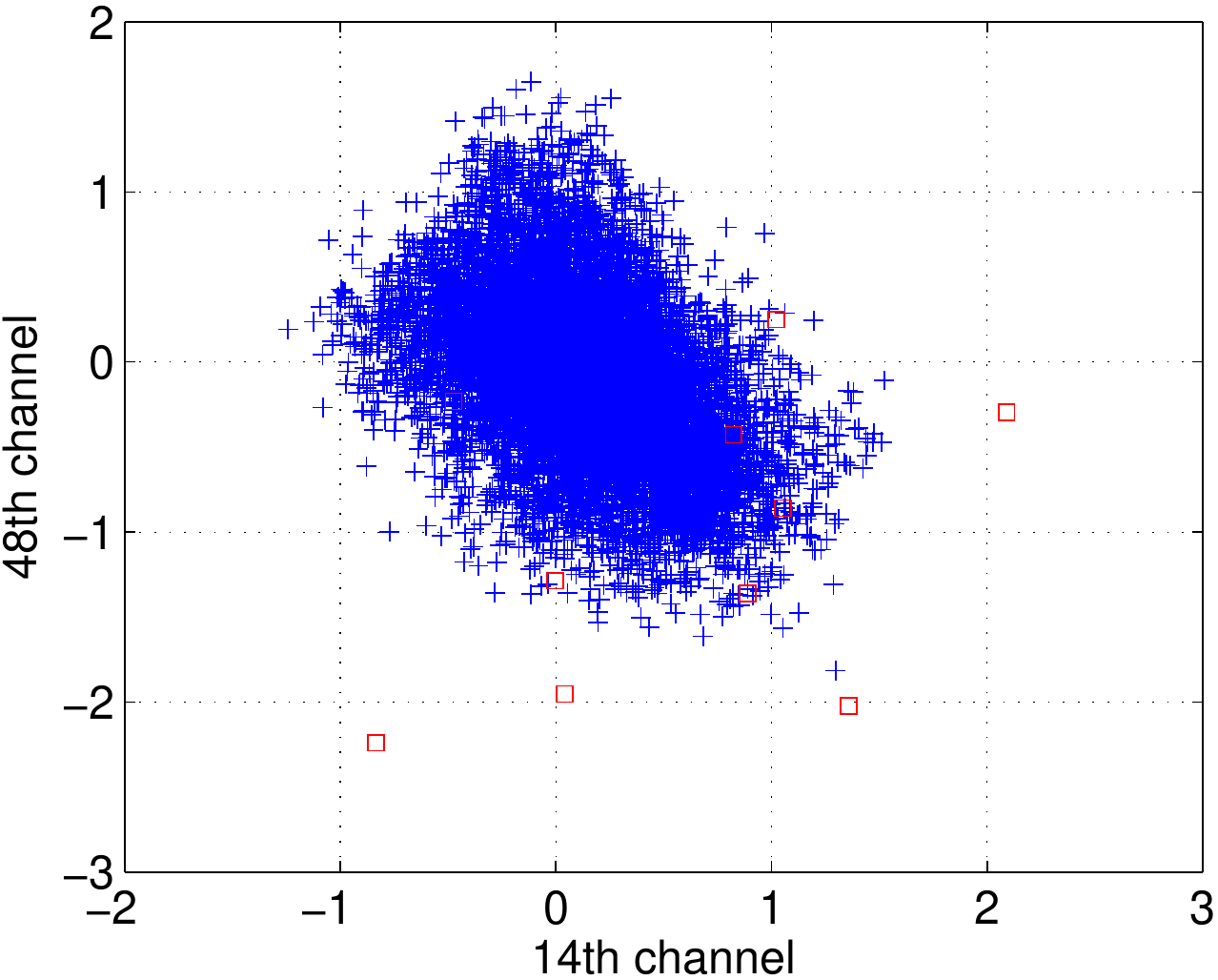}\\
	\end{tabular}
	\caption{Statistics of four feature channels from the 2nd convolutional layer in LeNet-5 on MNIST. Top two rows: histograms of feature values. Bottom two rows: joint distributions of feature values. Note: all feature values are from the center entry (4, 4); red squares are from nonsense samples of pure Gaussian noise.}
	\label{fig:dist_mnist_plainLeNet}
\end{figure}

\begin{figure}[htbp]
	\centering
	\begin{tabular}{ZZ}
		\includegraphics[width=3.0cm]{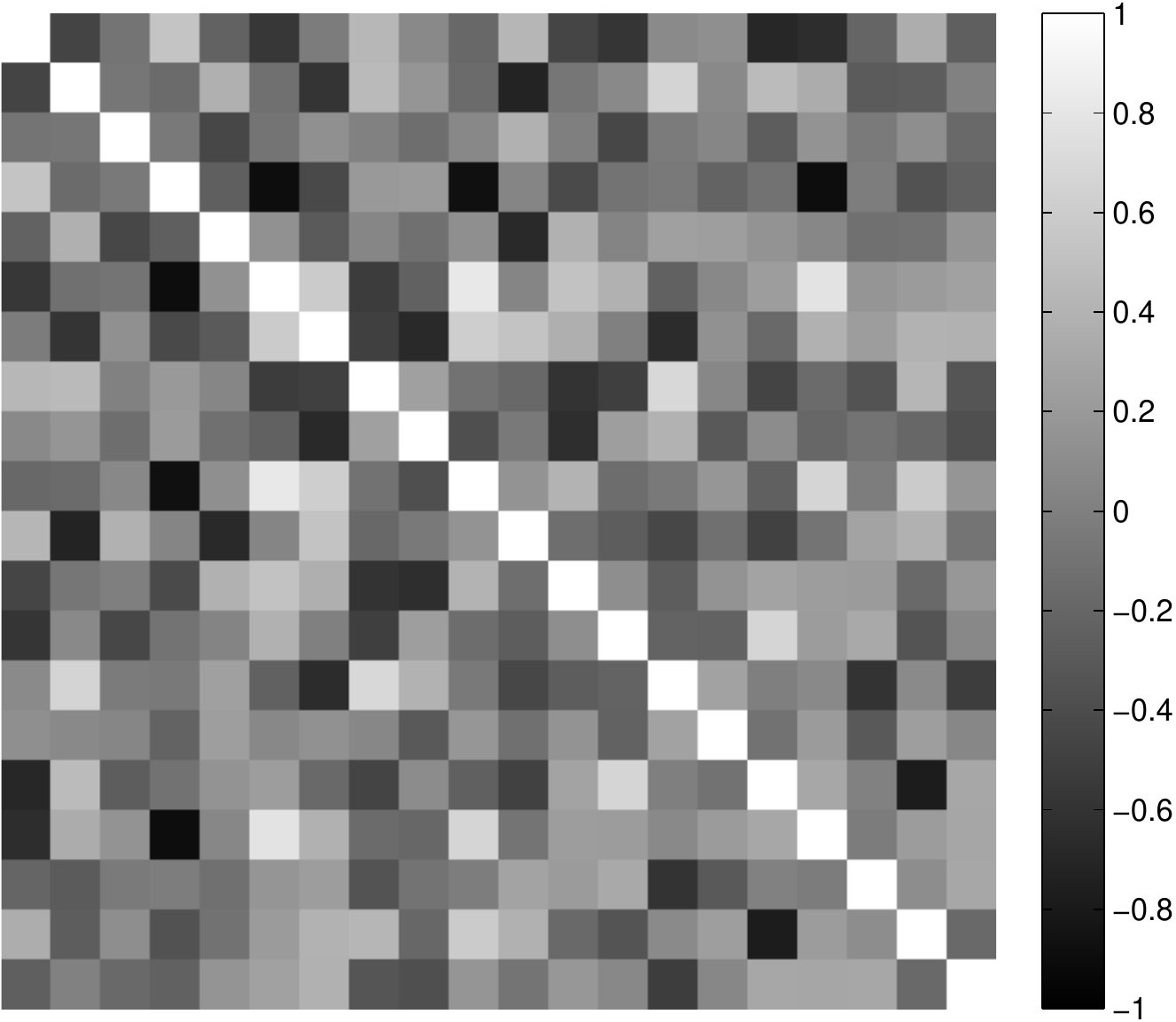}&
		\includegraphics[width=3.0cm]{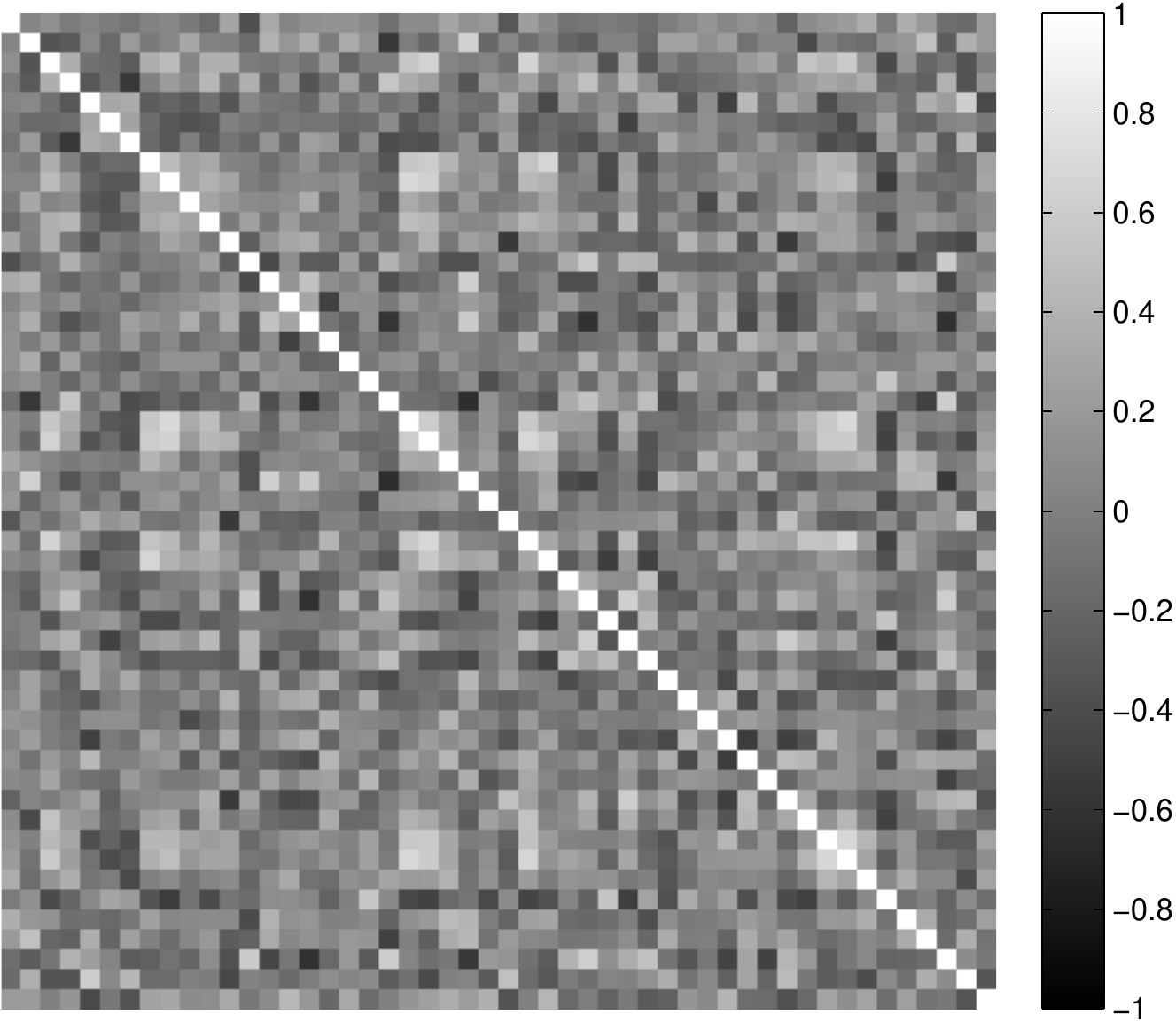}\\
		conv-1 & conv-2 \\
		\includegraphics[width=3.0cm]{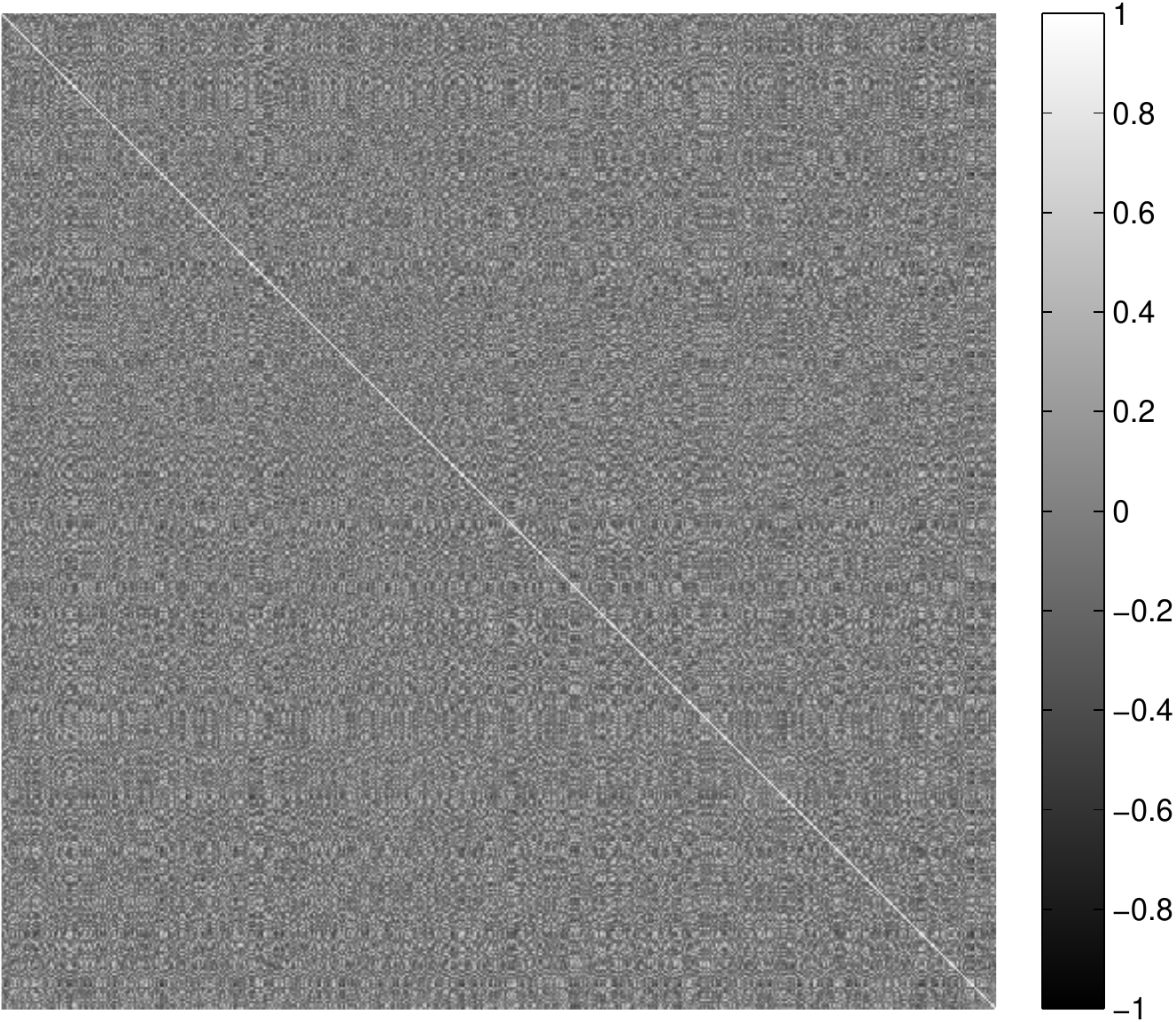}&
		\includegraphics[width=3.0cm]{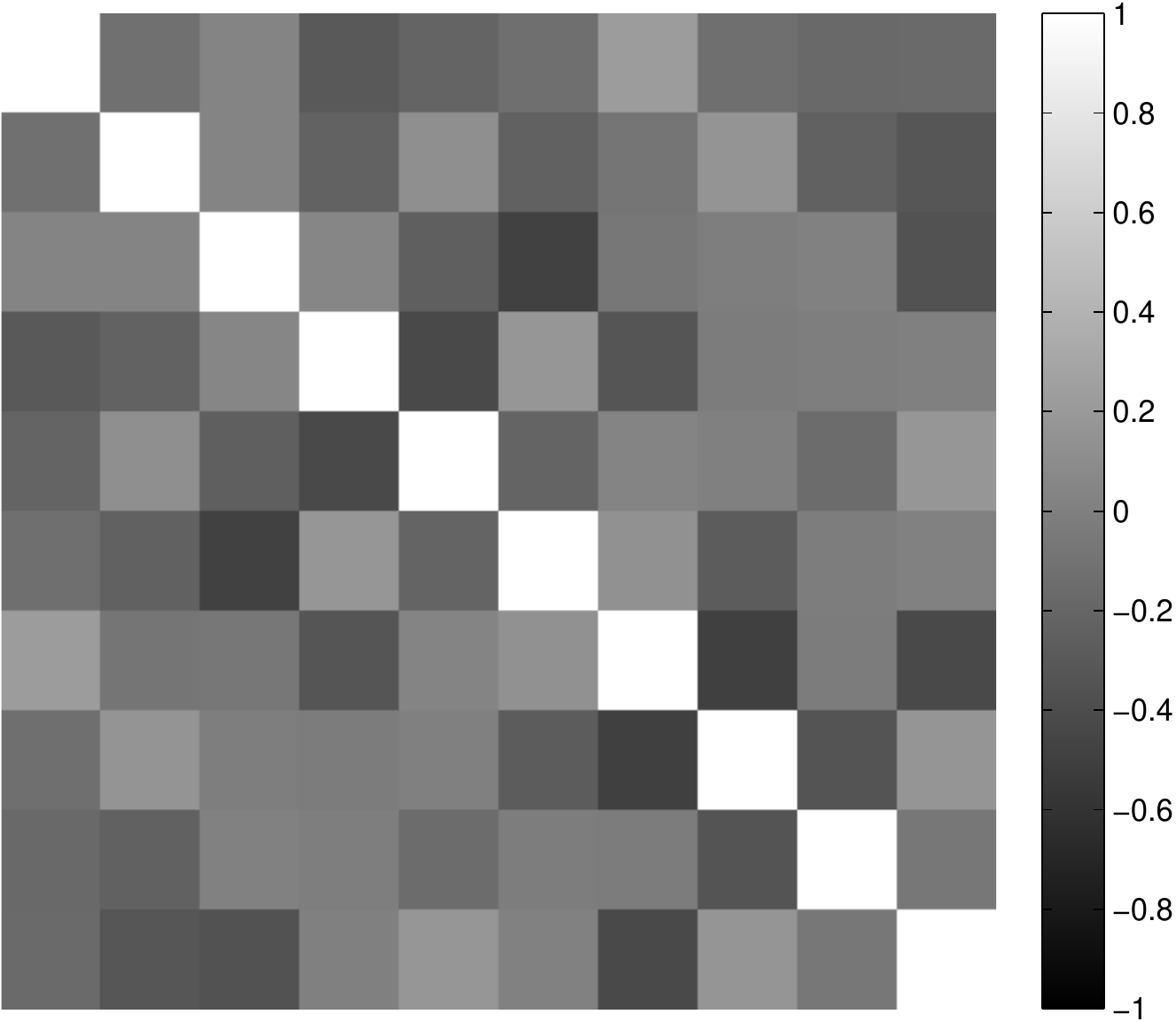}\\
		fc-1 & fc-2 \\
	\end{tabular}
	\caption{Visualization of the correlation matrices of features (on specified entries) from the convolutional/fully-connected layers in LeNet-5 on the MNIST test set.}
	\label{fig:corr_mnist_plainLeNet}
\end{figure}

\section{Symmetric Activation Functions}
\label{sec:saf}

Compared with commonly-used activation functions such as Rectified Linear Units (ReLU) \cite{Nair} and sigmoid, it is well known the RBFs have a localizing property. 1-D RBFs suppress signals that deviate from the normal in either direction while their counterparts suppress signals only unilaterally. By suppressing the signals of abnormal samples, we ensure that no strong prediction confidence will be produced at the top layer. Inspired by the ReLU, we propose the mirrored rectified linear unit (mReLU). In the paper, we call both the 1-D RBF and the mReLU symmetric activation functions (SAF) since they have the reflectional symmetry.

\subsection{1-D Radial Basis Function}
\label{sec:rbf1d}

The 1-D RBF adopted in the paper is $\sigma(x)=e^{-x^2}$, and its derivative is $-2x\sigma(x)$ (Fig. \ref{fig:symfunc_der}.a-b). Although it is parameter-free, any needed flexibility can be achieved by argument rescaling and shifting in the preceding convolutional layer. 

\subsection{Mirrored Rectified Linear Unit (mReLU)}
\label{sec:sReLU}

The 1-D RBF requires exponential and square calculations. These will add considerable load in both the training and test stages. Inspired by the simplicity of the ReLU, we propose the mReLU composed of only basic arithmetic and logic calculations (Fig. \ref{fig:symfunc_der}.c-d):

\begin{equation}
\label{eq:mReLU}
\text{mReLU}(x)=\left\{
\begin{array}{lll}
1+x, &\ \ \ \ {0>x>-1,}\\
1-x, &\ \ \ \ {+1>x>0,}\\
0, &\ \ \ \ \text{otherwise,}
\end{array} \right.
\end{equation}
or equivalently
\begin{equation}
\label{eq:mReLU_ReLUform}
\text{mReLU}(x)=\min\left(\text{ReLU}(1-x),\text{ReLU}(1+x)\right).
\end{equation}
Its derivative is
\begin{equation}
\label{eq:d_s2cases}
\frac{d}{dx}\text{mReLU}=\left\{
\begin{array}{lll}
+1, &\ \ \ \ {0>x>-1;}\\
-1, &\ \ \ \ {+1>x>0;}\\
0, &\ \ \ \ \text{otherwise.}
\end{array} \right.
\end{equation}

\subsection{The Capacity of SAF Networks}

In the limit of using an infinite number of RBF units, a network of high-dimension RBFs can approximate any function with arbitrary precision \cite{Park}. We can use the products of 1-D RBFs to exactly mimic any high-dimensional RBF, and therefore construct a 1-D RBF network for function approximation. However, the \emph{product-of-units} scheme is not compatible with popular CNNs, and it does not work for the mReLUs. Below we establish a property which is weaker than function approximation but with closer ties to our problem: sample classification.

We express the problem in geometric form. Suppose we have $N$ template samples $Z_1,Z_2,...,Z_N$ in a uniform $n$-D data space, and a fidelity threshold $r$. If an input sample $X$ has $\lVert X-Z_i\rVert \le r$ for a certain $i\in \{1,2,...,N\}$, we classify $X$ into the same category as $Z_i$; otherwise we regard it as a \emph{nonsense} sample. We assume the problem is self-consistent, i.e., there are no template samples from different categories with overlapping radius-$r$ hyperspheres. It is sufficient to find a method to judge whether a sample is in the radius-$r$ hypersphere centered at a given sample $Z_i$. First, we have
\begin{lemma}
	\label{lem:approx_hypersphere}
	For a given hypersphere, we can build a network of two SAF layers to judge whether a given sample is in the hypersphere with arbitrary precision.
\end{lemma}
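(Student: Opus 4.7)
The plan is to exploit the radial symmetry of the target indicator by approximating it as a sum of ridge functions, then use the second SAF layer as a threshold. After translating so that $Z$ is at the origin, I would pick $K$ directions $u_1,\ldots,u_K$ on the unit sphere $S^{n-1}$ whose empirical measure approximates the uniform surface measure $\mu$ (random sampling or any low-discrepancy design suffices). The first SAF layer has $K$ units computing $\phi_k(X) = \sigma_1(u_k^\top X/r)$, and a single second-layer unit uses weights $w_k = a/K$ and bias $b$ to produce
\[
g(X) \;=\; \sigma_2\!\bigl(a\,S_K(X) + b\bigr), \qquad S_K(X)\;=\;\tfrac{1}{K}\sum_{k=1}^K \phi_k(X).
\]
By rotational invariance of $\mu$ and evenness of $\sigma_1$, the pointwise limit
\[
F(t) \;=\; \int_{S^{n-1}} \sigma_1\!\bigl(t\,u^\top e_1\bigr)\,d\mu(u)
\]
depends only on $t=\|X\|/r$; moreover $F(0)=\sigma_1(0)=1$, $F(t)\to 0$ as $t\to\infty$, and $F$ is strictly decreasing because $\sigma_1$ is a non-constant even function non-increasing in $|s|$. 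Since $\sigma_1$ is Lipschitz, $S_K$ converges to $F(\|\cdot\|/r)$ uniformly on any bounded region.

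With $F$ strictly decreasing, the inside and outside of the ball of radius $r$ are mapped under $\|X\|\mapsto F(\|X\|/r)$ into the disjoint intervals $(F(1),1]$ and $[0,F(1))$ respectively. I would then choose $a,b$ so that the affine image $aF(\|X\|/r)+b$ lands in the peak band of the symmetric $\sigma_2$ exactly when $\|X\|<r$ and in the suppressed tail when $\|X\|>r$. Concretely, for $\sigma_2=\mathrm{mReLU}$, take $a=1/(1-F(1))$ and $b=-a$: the affine image lies in $(-1,0]$ inside the ball, giving $g>0$, and strictly below $-1$ outside, giving $g=0$. An analogous rescaling works for $\sigma_2=$~1-D~RBF, using its Gaussian peak at $0$. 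Strict monotonicity plus uniform continuity of $F$ on $[0,M/r]$ then deliver, for any prescribed $\varepsilon>0$, a threshold $\tau>0$ that separates $g(X)$ for $\|X\|\le r-\varepsilon$ from $g(X)$ for $\|X\|\ge r+\varepsilon$; taking $K$ large enough that $\|S_K-F(\|\cdot\|/r)\|_\infty$ is less than half this gap absorbs the finite-sample error and yields arbitrary precision.

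The main obstacle is that $\sigma_2$ is symmetric rather than sigmoidal, so the second layer cannot act as a monotone threshold on $S_K$ directly; the crux is the disjoint-interval observation above, which lets an affine rescaling place the inside and outside on opposite sides of the saturation band of a symmetric $\sigma_2$, turning an intrinsically non-monotone activation into a usable indicator. Secondary technical points are verifying strict monotonicity of $F$ by differentiation under the integral (using that $\sigma_1$ is non-constant and even), and quantifying the uniform approximation rate $\|S_K - F(\|\cdot\|/r)\|_\infty \to 0$ via the Lipschitz property of $\sigma_1$ together with the convergence of the empirical direction measure to $\mu$ in a bounded-Lipschitz sense.
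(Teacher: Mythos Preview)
Your argument is essentially correct, but it is a genuinely different construction from the paper's. The paper fixes exactly $n$ first-layer SAF units aligned with the coordinate axes: it sets $r_k=\lambda(x_k-z_{i,k})$, $s_k=e^{-r_k^2}$, then $t=\sum_k s_k-n$ and $y=e^{-t^2}$. Precision is obtained not by widening the layer but by shrinking the scale $\lambda$: the authors bound $y$ on each sphere $\|X-Z_i\|=r$ between an axis-aligned value $f_1(r)$ and a diagonal value $f_2(r)$, set the threshold $\tau=f_1(r)$, and show $\lim_{\tau\to 1} f_2^{-1}(\tau)/f_1^{-1}(\tau)=1$, which forces the volume ratio $V(\Omega\cap\Omega')/V(\Omega\cup\Omega')$ to $1$. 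So their network size is fixed at $n+1$ SAF units and the free parameter is $\lambda$.

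Your route instead averages over $K$ directions and lets $K\to\infty$, relying on the spherical-mean identity $S_K\to F(\|X\|/r)$ and the strict monotonicity of $F$ to manufacture a radially symmetric surrogate before thresholding. This is more general---it works for any even, non-increasing, Lipschitz $\sigma_1$ without computing explicit extremal points on spheres---but it pays for precision in width rather than in a single scale parameter, and it needs the auxiliary empirical-measure convergence step that the paper's coordinate construction avoids entirely. Your handling of the non-monotone $\sigma_2$ via the affine shift into one side of its support is a nice touch; the paper sidesteps this by arranging $t\le 0$ always (since each $s_k\le 1$), so that only the increasing half of $e^{-t^2}$ is ever used. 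Both arguments are sound; the paper's is tighter in resources, yours is cleaner in its independence from the specific form of the SAF.
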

\begin{proof}
	We present the proof for 1-D RBFs, as it is straightforward to be extended to the mReLU. We build a basic block of five layers as shown in Fig. \ref{fig:capacity}.a. Let $X=\left[x_1,x_2,...,x_n\right]^T,Z_i=\left[z_{i,1},z_{i,2},...,z_{i,n}\right]^T$. We calculate
	\begin{equation}
	\label{eq:r_s_t_y}
	\begin{split}
	r_k(X)&=\lambda(x_k-z_{i,k}), \ \ \ s_k(X)=e^{-r_k^2},\\
    t(X)&=\sum_k{s_k}-n, \ \ \ \ \ \ \ \ y(X)=e^{-t^2}
	\end{split}
	\end{equation}
	where $\lambda$ is a carefully chosen parameter, the $s_k$'s are the output of the first SAF layer and the single $y$ is the output of the second SAF layer. To achieve a given error rate $\epsilon$, it is sufficient to find a threshold $\tau$ which holds $\frac{V(\Omega \cap \Omega^\prime)}{V(\Omega \cup \Omega^\prime)}\ge 1-\epsilon$. Here $V(\cdot)$ is the volume of a closed subspace, $\Omega=\{X:\lVert X-Z_i\rVert \le r\}$ and $\Omega^\prime=\{X:y(X) \ge \tau\}$. We let
	\begin{equation}
	\label{eq:f_X_r}
	\begin{split}
	f_1(r)&=y\left(\left[z_{i,1}+r,z_{i,2},...,z_{i,n}\right]^T\right),\\ 
	f_2(r)&=y\left(\left[z_{i,1}+\frac{r}{\sqrt{n}},z_{i,2}+\frac{r}{\sqrt{n}},...,z_{i,n}+\frac{r}{\sqrt{n}}\right]^T\right).
	\end{split}
	\end{equation}
	It is easy to prove that $f_1(r)\le y(X)\le f_2(r)$ for any $X$ which holds $\lVert X - Z_i \rVert = r$.
	
	We set $\tau=f_1(r)$ and $r_0=f_2^{-1}(\tau)$. It is easy to see that $r_0 \le r$. So we have $\Omega^{\prime\prime}\subset\Omega^\prime \subset \Omega$ where $\Omega^{\prime\prime}=\{X:\lVert X-Z_i\rVert \le r_0\}$. Since we have $V(\Omega)=c\cdot r^n$ and $V(\Omega^{\prime\prime})=c\cdot r_0^n$ for a certain constant $c$, it is required that $\left(\frac{r_0}{r}\right)^n\ge 1-\epsilon$, or equivalently
	\begin{equation}
	\label{eq:f2_f1_1_e}
	\frac{f_2^{-1}(\tau)}{f_1^{-1}(\tau)}\ge \left(1-\epsilon\right)^{\frac{1}{n}}.
	\end{equation}
	Since $\lim\limits_{\tau\rightarrow 1}\frac{f_2^{-1}(\tau)}{f_1^{-1}(\tau)}=1$, there must exist a $\tau_0$ satisfying eq. \ref{eq:f2_f1_1_e}, and so does $\lambda$. In other words, we can judge whether a sample $X$ is in the hypersphere of $Z_i$ with a error rate less than $\epsilon$ by comparing $y(X)$ and $\tau_0$.
\end{proof}
\begin{figure}[htbp]
	\centering
	\begin{tabular}{ZZ}
		\includegraphics[width=3.0cm]{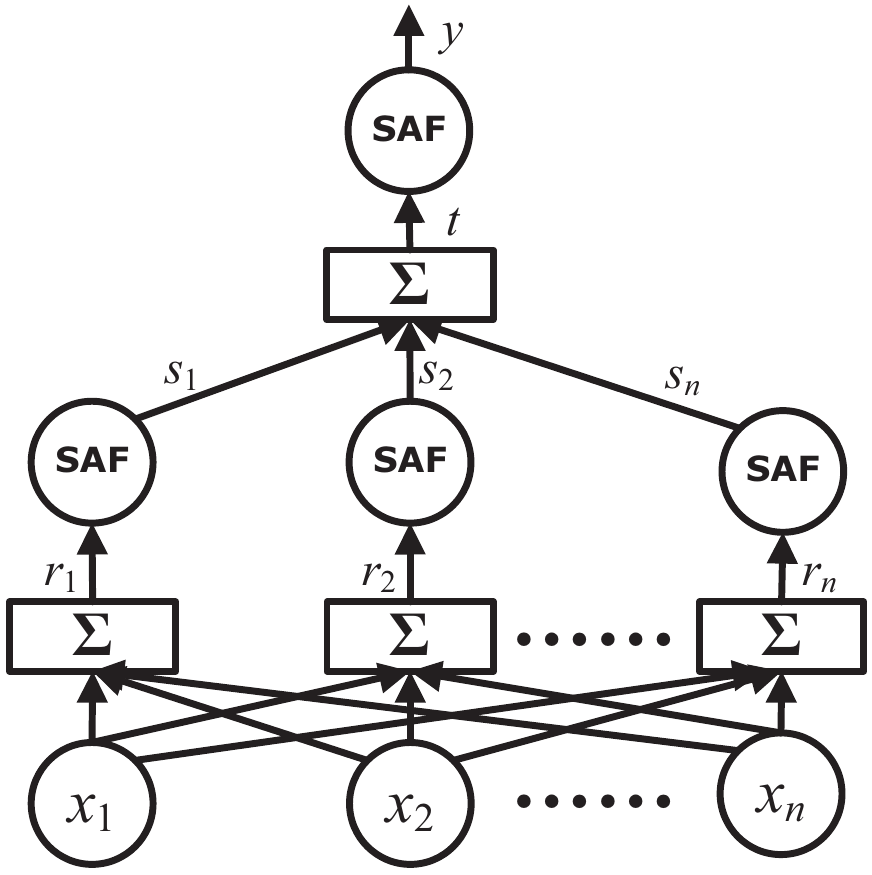}&
		\includegraphics[width=3.5cm]{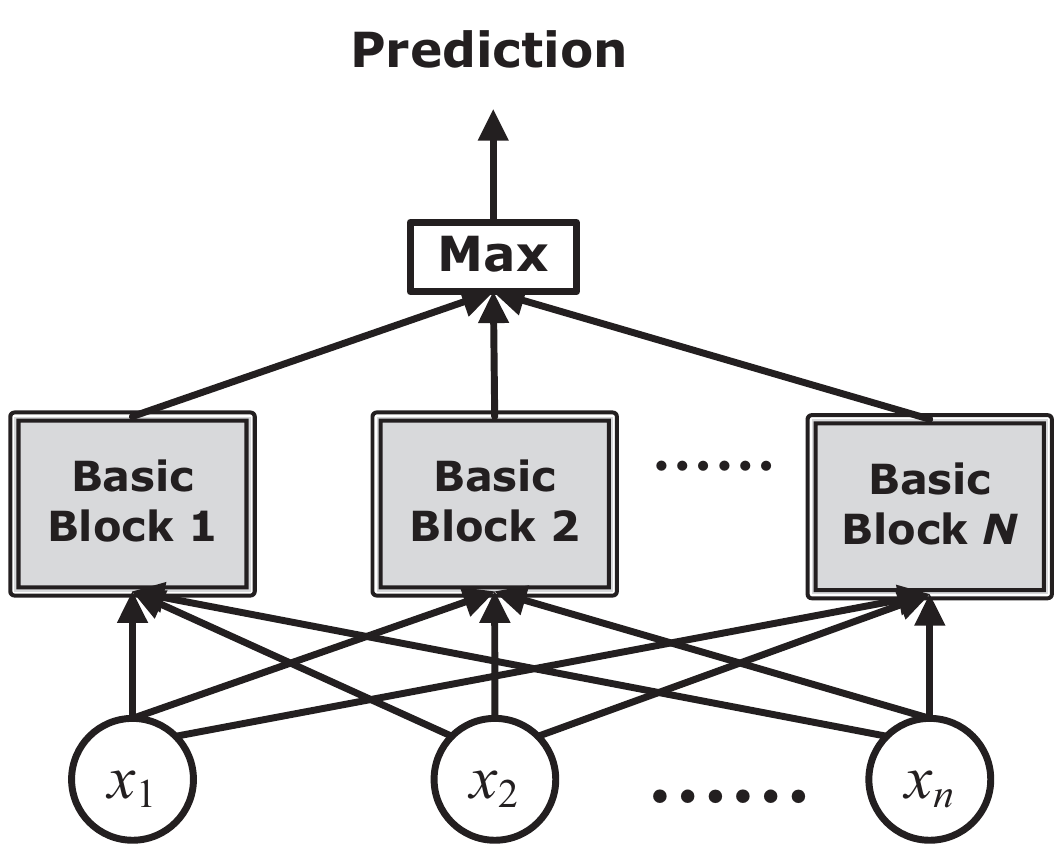} \\
		(a) & (b) \\
	\end{tabular}
	\caption{Using SAF units to build classification networks. (a) a basic block using two SAF layers to judge whether an input is in a $n$-D hypersphere approximately. (b) a network using the basic blocks in (a) to perform the classification task.}
	\label{fig:capacity}
\end{figure}
\begin{figure}
	\centering
	\begin{tabular}{DDD}
		\includegraphics[width=2.2cm]{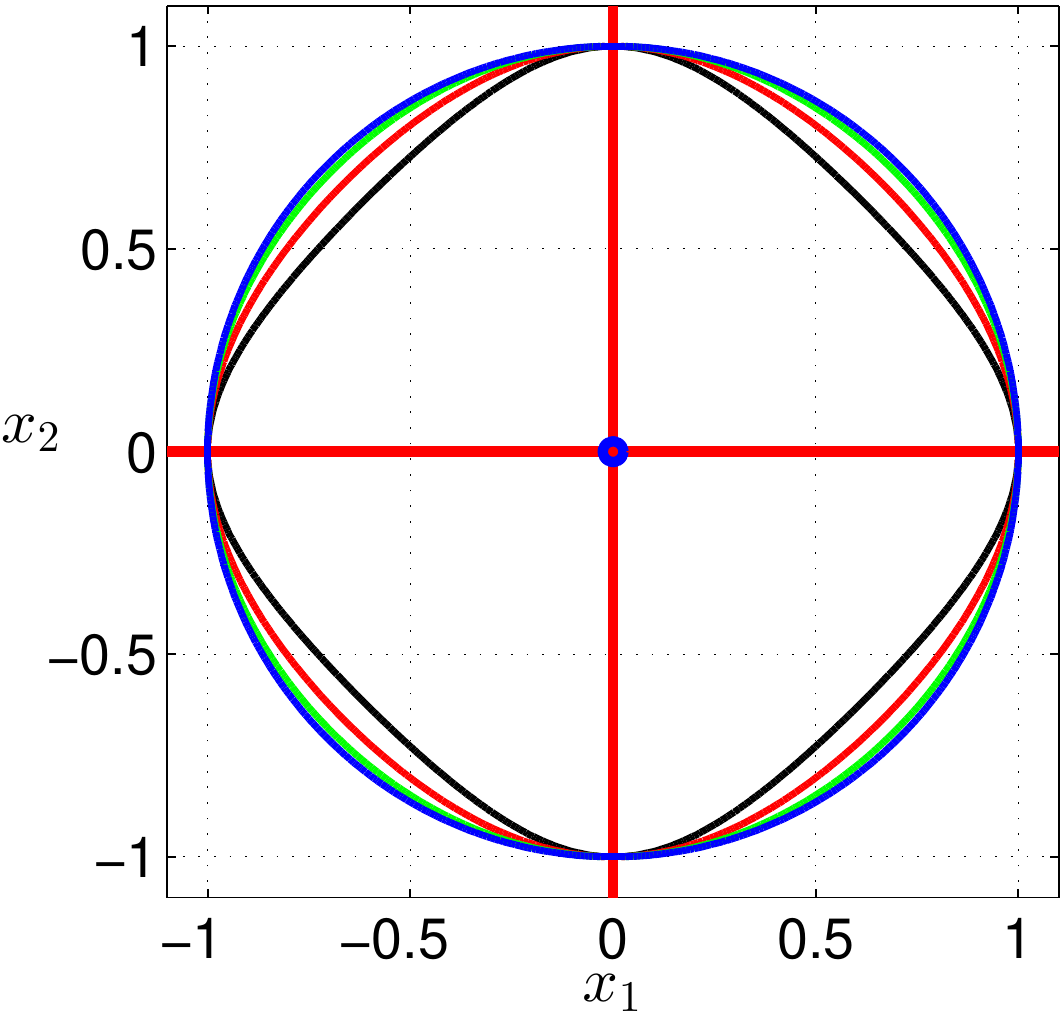}&
		\includegraphics[width=2.5cm]{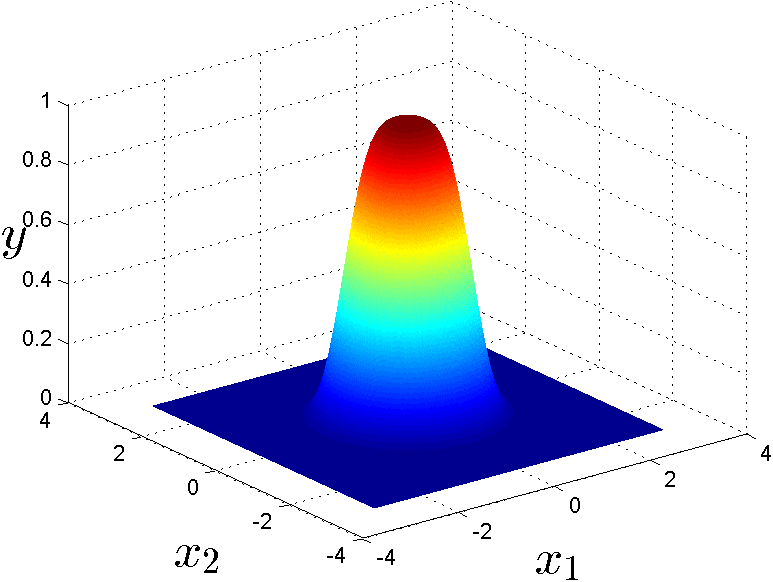}&
		\includegraphics[width=2.5cm]{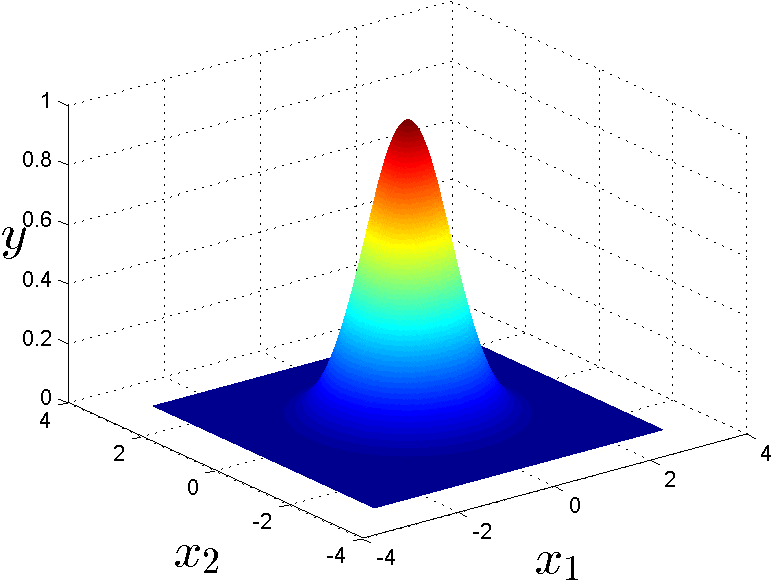}\\
		\ \ \ \ (a) & (b) & (c)
	\end{tabular}
	\caption{2-D approximations using 1-D RBFs. (a) using the basic block to approximate a unit disk with different values of $\lambda$: 0.1 (bordered in blue), 0.4 (green), 0.7 (red) and 1.0 (dark). (b) the approximation in the $x_1$-$x_2$-$y$ space using five 1-D RBF units and $\lambda=0.4$. (c) a 2-D RBF.}
	\label{fig:shapes}
\end{figure}

For example, we can approximate the unit disk with three 1-D RBF units in Fig. \ref{fig:shapes}.a. Then we use the basic block in the proof of Lemma \ref{lem:approx_hypersphere} to build the classification network. As shown in Fig. \ref{fig:capacity}.b, we lay a basic block for each template sample, then choose the maximum $y_{max}$ across all basic blocks. It $y_{max}$ is larger than the threshold $\tau_0$ in the proof of Lemma \ref{lem:approx_hypersphere}, we output the category of the chosen basic block, otherwise we give a \emph{nonsense} prediction. Interestingly, if we are allowed to use more 1-D RBF units in the basic block, the output $y$ will be increasingly similar to a high-dimension RBF, as shown in Fig. \ref{fig:shapes}.b-c.

Although the network in Fig. \ref{fig:capacity}.b is compatible with the popular CNNs in structure, the network is infeasible in practice as we would require a huge number of SAF units. Instead in Sec. \ref{sec:building_robustCNN}, we exploit the advantages of deep CNNs, which generate representations distributedly, to build effective classifiers with moderate number of SAFs.

\section{Building Robust CNNs}
\label{sec:building_robustCNN}

In popular CNNs, convolutional layers are cascaded to learn increasingly abstract features \cite{Zeiler}. We find the empirical distributions of these features are fairly compact and symmetric (Fig. \ref{fig:dist_mnist_plainLeNet}). Thus it is feasible to suppress unusual signals using SAFs which are similarly shaped. We insert additional SAF layers immediately after the convolutional layers to suppress unusual signals, as shown in Fig. \ref{fig:building_RobustCNN}. Any ReLU activation layers can be discarded since the SAF outputs are non-negative. We do the same thing to sigmoid activation layers. We also add a 1-D RBF layer after the fully-connected layer as a new final layer, but do not insert SAF layers between fully-connected layers. In the remainder, we call the modified models \emph{robust CNNs} and the original ones \emph{plain CNNs}.

\subsection{The Hybrid Loss function}

We design a special hybrid loss function for the robust CNN that combines the negative logarithm of the normalized value, and the weighted $p$-order errors ($p>1$) according to:
\begin{equation}
\label{eq:normloss}
\mathcal{L}_{h}\left(X,\ell,\theta\right)=-\alpha_1\cdot\ln \left(\frac{y_\ell}{\sum_i{y_i}}\right)+\alpha_2\cdot\left(1-y_\ell\right)^p+\alpha_3\cdot\sum_{i\ne \ell}{y_i^p}
\end{equation}
where $\ell$ is the correct label, $Y=\{y_i\}$ is the prediction on $X$ by the robust CNN with the parameter $\theta$, $L$ is the amount of categories, and $\alpha_1,\alpha_2, \alpha_3$ are weighting parameters. $\mathcal{L}_h$ is designed to prefer $y_\ell$ to be large both relatively and absolutely. It is necessary to involve the $p$-order errors to get sufficiently large $y_\ell$, thus to recognize nonsense samples which would be of small $y_\ell$'s. We set $p=2$ in all our experiments.

\subsection{Robustness Analysis}
\begin{figure}[htbp]
	\centering
	\begin{tabular}{c}
		\includegraphics[width=7.5cm]{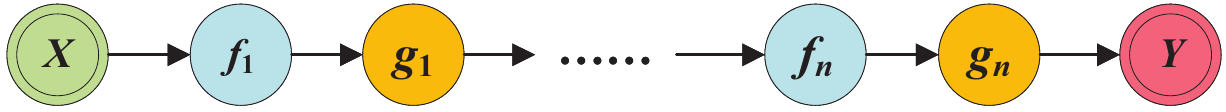} \\
	\end{tabular}
	\caption{A simplified depiction of a robust CNN. $X$ is the input, and $Y$ is the prediction. Blue circles: linear layers. Yellow circles: SAF layers.}
	\label{fig:CNN_model}
\end{figure}
In the simplified robust CNN in Fig. \ref{fig:CNN_model}, $f_i$'s are convolutional or pooling modules, and $g_i$'s are the SAF modules following convolutional layers. We have
\begin{equation}
Y=g_{n}(f_{n}(g_{n-1}(f_{n-1}(...g_{1}(f_{1}(X))...))))
\end{equation}
where $Y=\left[y_1,y_2,..,y_L\right]^T$ is the confidence vector of length $L$, the number of categories. The partial derivative of $y_\ell$ with respect to $x_k$ is
\begin{equation}
\label{eq:partial_xk}
\frac{\partial y_\ell}{\partial x_k}=\frac{d g_{n,\ell}}{d f_{n,\ell}}\cdot\sum_{i_{n-1},...,i_1}{\left(\frac{\partial f_{n,\ell}}{\partial g_{n-1,i_{n-1}}}\cdot...\cdot\frac{\partial f_{1,i_{1}}}{\partial x_k}\right)}
\end{equation}
where $i_1\!\rightarrow\! i_2\!\rightarrow,...,\rightarrow\! i_{n-1}\!\rightarrow\!\ell$ is a path from $x_k$ to $y_\ell$ crossing all layers, in other words, they are the indices of cells whose receptive field contain $x_k$. For unusual signals which are away from the high-density regions, the $g$'s in eq. \ref{eq:partial_xk} will be considerably small. Consequently, according to the derivatives of SAFs shown in Fig. \ref{fig:symfunc_der}, the $\frac{\partial g_{\left(.,.\right)}}{\partial f_{\left(.,.\right)}}$'s are also small or nearly zero, implying $\frac{\partial y_\ell}{\partial x_k}$ is unlikely to be large. If the norm of the gradient $\nabla{y_\ell}$ with respect to $X$ is small, we have to make a large perturbation $\Delta X$ to achieve a noticeable change of $y_\ell$, as shown in Fig. \ref{fig:robust_analysis}. In conclusion, to achieve an adversarial sample successfully, the perturbation for the robust CNN would be larger than that for the plain CNN.

\begin{figure}
	\centering
	\begin{tabular}{c}
		\includegraphics[width=8.0cm]{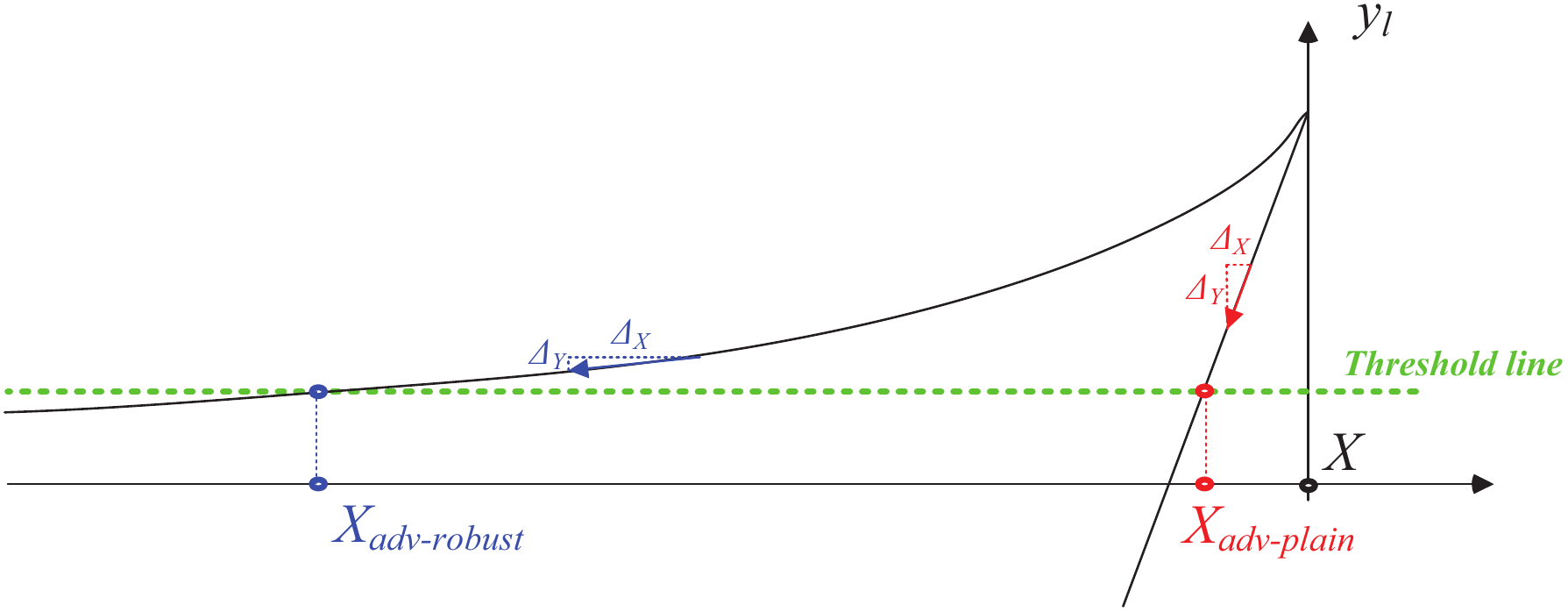} \\
	\end{tabular}
	\caption{Perturbing samples for the plain and robust CNNs. $X$ is a clean sample, and $y_\ell$ is the confidence score for the $\ell$th category. $X_{adv-robust}$ and $X_{adv-plain}$ are the adversarial samples for the robust and plain CNNs respectively.}
	\label{fig:robust_analysis}
\end{figure}

\subsection{Training Issues}
\label{sec:training}

An adversarial training requires repeated generations of adversarial samples and so increases the training load significantly. In this paper, we also use training data additional to the clean samples; but instead of adversarial samples, we use random perturbation of the mean of training data. We find the mean sample itself might be easily mis-classified (it seldom belongs to any meaningful category), and we can generate nonsense samples by perturbing it slightly. Therefore we use it as a typical hard instance, and train robust CNNs to produce high confidence for \emph{nonsense} on it. However, there is only one mean sample and thus little effect on the training process. So we perturb it with the Gaussian noise and get a certain number of noisy copies. We call this trick \emph{mean training} for short.

It is suggested in \cite{Goodfellow} that randomly perturbed samples are of little significance. Interestingly, we do find them effective for our network, especially when adopted together with mean training. We generate perturbed samples by adding perturbations randomly drawn from $\left[-\delta,+\delta\right]^{w\times h \times c}$ ($w$: width, $h$: height, $c$: channel) to the original samples. We perturb a fraction of the train set and leave others untouched. We call this trick \emph{random training} for short.

We use the standard stochastic gradient descent method to train the robust CNNs. It is unnecessary to use a pre-training stage to find RBF centers because 1-D RBFs are parameter-free. The batch normalization trick \cite{Ioffe} is necessary to accelerate training robust CNNs and prevent
the gradient from vanishing.

\section{Experiments}
\label{sec:exp}

We use two well-studied datasets, MNIST \cite{LeCun} and CIFAR-10 \cite{CIFAR}, in our experiments. In the following figures and tables, we use \emph{plain}, \emph{RBF} and \emph{mReLU} to refer respectively to plain CNNs, robust CNNs using 1-D RBF, and robust CNNs using mReLUs. We use the flags -a, -r and -m to indicate the use of adversarial, random and mean training. The plain CNNs are typical models implemented in \cite{Vedaldi}. We use all default settings in \cite{Vedaldi} to train them, except for whitening and normalization. We plan to release our source code and data upon acceptance.

In related literature, CNNs are expected to be robust against three kinds of sample: adversarial, nonsense and noisy. If our proposals about SAFs are correct, then robust CNNs will be as accurate as plain CNNs; on adversarial/noisy samples with tiny perturbations, the classification accuracy should not drop remarkably; while on severely perturbed samples or pure noise images, they should be classified as \emph{nonsense}. The nonsense case is a little different for models without an explicit nonsense category: we consider they predict a sample as nonsense if all confidence scores of meaningful categories are below a threshold, e.g. 0.5.

We follow \cite{Goodfellow} to generate adversarial samples and nonsense samples by:
\begin{equation}
\label{eq:s_adv}
X_{adv}=X+255\cdot\beta\cdot \text{sign}\left(\nabla \mathcal{L}_h(X,\ell_{max},\theta)\right)
\end{equation}
where $\nabla \mathcal{L}_h(X,\ell,\theta)$ is the gradient towards the smaller loss for a certain incorrect category $\ell$, $\beta$ is the strength parameter, and $\ell_{max}=\arg\max_\ell{\lVert\nabla \mathcal{L}_h(X,\ell,\theta)\rVert_1}$. $X$ and $X_{adv}$ are the original and adversarial/nonsense samples respectively. Nonsense samples are generated from the stationary Gaussian noise, shifted and scaled to cover the range 0$\sim$255. Noisy samples are generated by perturbing clean samples with stationary Gaussian noise. We collect classification accuracies on varying perturbation strength $\beta$, rather than at a single strength value, to better understand the robustness of CNNs. Meanwhile, we present the well-adopted Peak-Signal-Noise-Ratios (PSNR) to assess the image quality easily.

\subsection{MNIST}
\label{sec:exp_mnist}

We use LeNet-5 \cite{LeCun} as the plain CNN for MNIST, please see Table \ref{tab:str_mnist} for its structure. There are 70,000 clean samples and we follow \cite{Vedaldi} to use 60,000 as the train set and 10,000 as the test set. There are 10,000 groups of nonsense samples. All CNNs are trained with 20 epochs. We set $\alpha_1\!=\!1,\alpha_2\!=\!1,\alpha_3\!=\!0$ for the hybrid loss functions of robust CNNs.

\begin{table*}[htbp]\footnotesize
	\begin{center} 
		\caption{Structures of the plain and robust CNNs for MNIST. Parameters of convolutional layers: \textbf{cv1}-(5, 5, 20), \textbf{cv2}-(5, 5, 50) (in the \emph{height-width-channel} order). The number of hidden units in fully-connected layers are 500 (\textbf{fc1}) and 10 (\textbf{fc2}). \textbf{max}-max pooling. \textbf{sloss}-softmaxloss. \textbf{hloss}-hrbridloss.}
		\label{tab:str_mnist}
		\begin{tabular}{l|c|c|c|c|c|c|c|c|c|c|c|c} 
			\hline
			Models & \multicolumn{11}{c|}{Layers} & \#Layers \\
			\hline
			Plain & cv1 & - & max &  cv2 & - & max & fc1 & ReLU & fc2 & - & sloss & 8 \\
			\hline
			RBF & cv1 & 1-D RBF & max &  cv2 & 1-D RBF & max & fc1 & ReLU & fc2 & 1-D RBF & hloss & 11 \\
			\hline
			mReLU & cv1 & mReLU & max &  cv2 & mReLU & max & fc1 & ReLU & fc2 & 1-D RBF & hloss & 11 \\
			\hline
		\end{tabular} 
	\end{center} 
\end{table*}

We present accuracies and error rates in Fig. \ref{fig:mnist_robustness} and Table \ref{tab:mnist_robustness}. It is clear that CNNs using SAFs are much more robust than the plain CNN. The best one, mReLU-r-m, makes marginally more errors than the plain CNN for clean samples, adversarial samples with perturbation of up to strength $\beta=0.01$, and noisy samples of up to $\beta=0.10$; but has greatly superior performance for adversarial and noisy samples of large perturbation, and all nonsense samples. Particularly, the accuracy of mReLU-r-m under adversarial perturbations of the strength $\beta=0.25$ is only 0.073, which is much lower than 0.179 of Maxout \cite{Goodfellow2013} trained using adversarial training \cite{Goodfellow}.

Random training is very effective in improving the robustness of SAF CNNs against both adversarial and noisy samples. In our opinion, random perturbations of the train samples make the large-output/non-zero widths of SAFs effectively broader, which improving robustness to perturbations of moderate strength. Using all training tricks, we can improve all CNNs to be considerably robust against nonsense and noisy samples, even the plain CNN. Adversarial training, which is recommended in previous literatures, is much less effective improving the robustness against both adversarial and nonsense samples. It is probably due to our insufficient number of adversarial training rounds. So in Sec. \ref{sec:cifar}, we train all CNNs with only the combination of random and mean training.

\begin{figure*}[htbp]
	\centering
	\begin{tabular}{BBX}
		\includegraphics[width=4.8cm]{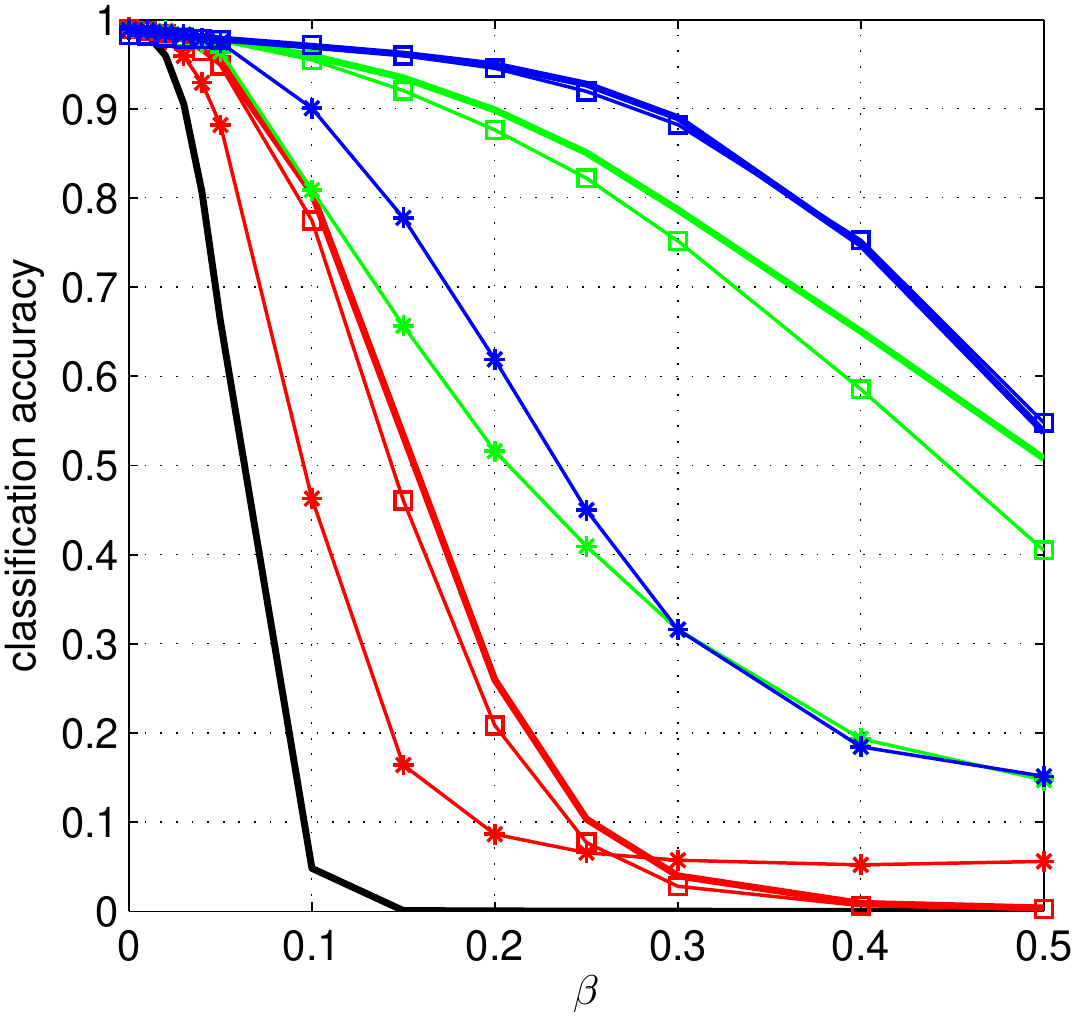} &
		\includegraphics[width=4.8cm]{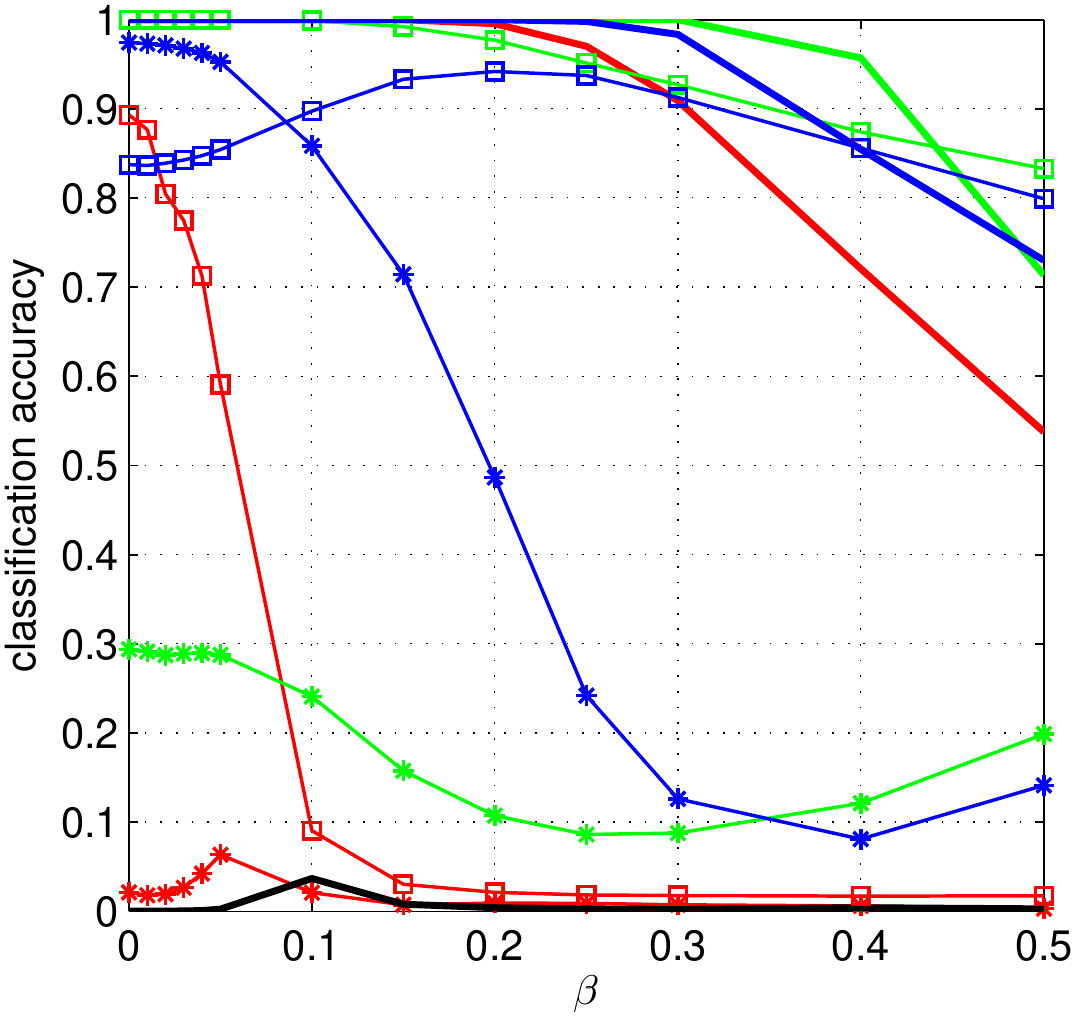} &
		\includegraphics[width=6.5cm]{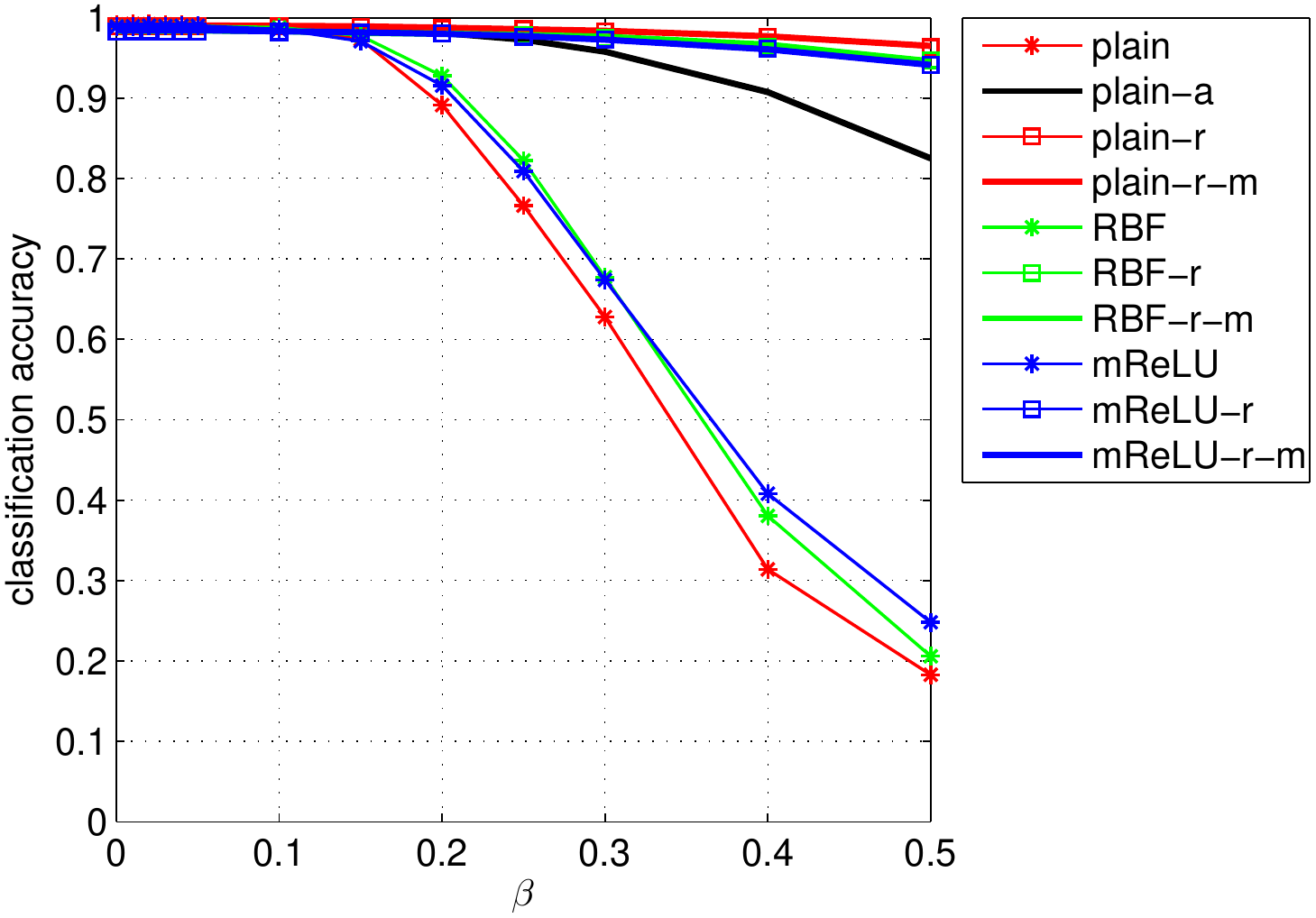} \\
		\ \ \ (a) & \ \ \ (b) & \multicolumn{1}{l}{\ \ \ \ \ \ \ \ \ \ \ \ \ \ \ (c)}
	\end{tabular}
	\caption{Accuracies of the CNNs on the MNIST test set. (a), (b) and (c) are the accuracies on adversarial, nonsense and noisy samples respectively. The horizontal axes are the perturbation strength $\beta$'s.}
	\label{fig:mnist_robustness}
\end{figure*}

We show some examples of mReLU-r-m classifications in Fig. \ref{fig:mnist_examples}. The robust CNN gives correct predictions on even severe distorted samples. It does make incorrect predictions for samples bounded by color boxes, but the images are hard even for a human being. The case of green boxes is a bit different: the predictions are wrong, but it is sensible to classify these samples into the \emph{nonsense} category.

\begin{table*}\footnotesize
	\begin{center} 
		\caption{Error rates on the MNIST test set. The best values are shown in bold.}
		\label{tab:mnist_robustness}
		\begin{tabular}{l|c|r|r|r|r|r|r|r|r|r|r|r|r} 
			\hline
			$\beta$ & clean & 0.01 & 0.02 &  0.03 & 0.04 & 0.05 & 0.10 & 0.15 & 0.20 & 0.25 & 0.30 & 0.40 & 0.50 \\
			\hline
			PSNR & - & 40.00 & 33.98 &  30.46 & 27.96 & 26.02 & 20.00 & 16.48 & 13.98 & 12.04 & 10.46 & 7.96 & 6.02 \\
			\hline
			\multicolumn{14}{l}{} \\ [-2.00 ex]
			\multicolumn{14}{l}{On \textbf{adversarial} samples:} \\
			\multicolumn{14}{l}{} \\ [-2.00 ex]
			\hline
			plain & \textbf{0.009} & \textbf{0.013} & 0.023 & 0.041 & 0.071 & 0.118 & 0.537 & 0.836 & 0.914 & 0.934 & 0.943 & 0.948 & 0.944 \\
			\hline
			mReLU-r-m & 0.015 & 0.016 & \textbf{0.017} & \textbf{0.019} & \textbf{0.020} & \textbf{0.022} & \textbf{0.030} & \textbf{0.039} & \textbf{0.051} & \textbf{0.073} & \textbf{0.110} & \textbf{0.252} & \textbf{0.463} \\
			\hline
			\multicolumn{14}{l}{} \\ [-2.00 ex]
			\multicolumn{14}{l}{On \textbf{nonsense} samples:} \\
			\multicolumn{14}{l}{} \\ [-2.00 ex]
			\hline
			plain & 0.979 & 0.982 & 0.981 & 0.974 & 0.958 & 0.937 & 0.979 & 0.993    & 0.991 & 0.991 & 0.993 & 0.994 & 0.997 \\
			\hline
			mReLU-r-m & \textbf{0.000}  &  \textbf{0.000}  &  \textbf{0.000}  &  \textbf{0.000} &  \textbf{0.000}  &  \textbf{0.000} &  \textbf{0.000} &  \textbf{0.000} &  \textbf{0.000} & \textbf{0.002} & \textbf{0.017} & \textbf{0.146} & \textbf{0.270} \\
			\hline
			\multicolumn{14}{l}{} \\ [-2.00 ex]
			\multicolumn{14}{l}{On \textbf{noisy} samples:} \\
			\multicolumn{14}{l}{} \\ [-2.00 ex]
			\hline
			plain & \textbf{0.009} & \textbf{0.008} & \textbf{0.008} & \textbf{0.009} & \textbf{0.009} & \textbf{0.010} & \textbf{0.013} & 0.027 & 0.109 & 0.234 & 0.372 & 0.686 & 0.817 \\
			\hline
			mReLU-r-m & 0.015 & 0.016 & 0.016 & 0.016 & 0.016 & 0.016 & 0.017 & \textbf{0.018} & \textbf{0.020} & \textbf{0.023} & \textbf{0.027} & \textbf{0.039} & \textbf{0.059} \\
			\hline
		\end{tabular} 
	\end{center} 
\end{table*}

\begin{figure*}\footnotesize
	\centering
	\begin{tabular}{crrrrrrrrrrrc}
		\multicolumn{13}{c}{\includegraphics[width=15.5cm]{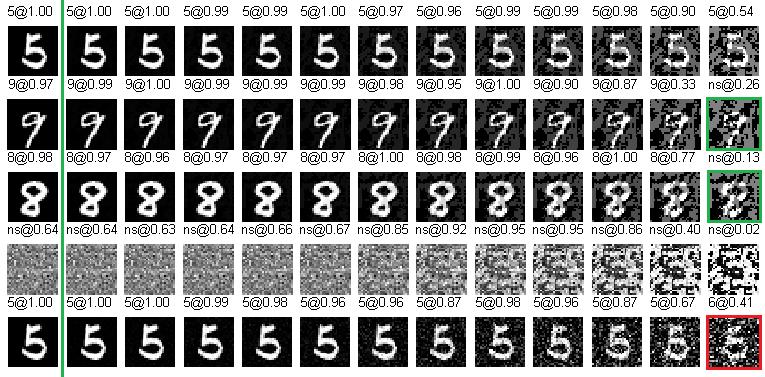}}\\
		\hline
		\multicolumn{13}{l}{Strengths $\beta$}\\
		\ \ clean & \ \ \ \ 0.01 & \ \ \ 0.02 &  \ \ \ 0.03 & \ \ \ 0.04 & \ \ \ \ 0.05 & \ \ \ 0.10 & \ \ \ 0.15 & \ \ \ 0.20 & \ \ 0.25 & \ \ 0.30 & \ \ \ 0.40 & 0.50 \\
		\hline
		\multicolumn{13}{l}{PSNR}\\
		- & 40.00 & 33.98 &  30.46 & 27.96 & 26.02 & 20.00 & 16.48 & 13.98 & 12.04 & 10.46 & 7.96 & 6.02 \\
		\hline
	\end{tabular}
	\caption{Examples of mReLU-r-m classifications on the MNIST test set. The prediction and confidence are on the top of individual samples. \textbf{ns}-nonsense. Rows 1-3: adversarial samples. 4th row: nonsense samples. 5th row: noisy samples.}
	\label{fig:mnist_examples}
\end{figure*}

\subsection{CIFAR-10}
\label{sec:cifar}

The plain CNN for CIFAR-10 is a deeper LeNet model; see Table \ref{tab:str_cifar} for its structure. There are 60,000 clean samples and we follow \cite{Vedaldi} to use 50,000 as the train set and 10,000 as the test set. The training set is augmented by randomly reflecting samples about their vertical mid-line. There are 10,000 groups of nonsense samples. The robust CNNs are trained with 90 epochs, twice of that for the plain CNN. We set $\alpha_1\!=\!1,\alpha_2\!=\!1,\alpha_3\!=\!\frac{1}{9}$ for the hybrid loss functions of robust CNNs.

\begin{table*}[htbp]\scriptsize
	\begin{center} 
		\caption{Structures of the plain and robust CNNs for CIFAR-10. Parameters of convolutional layers: \textbf{cv1}-(5, 5, 32), \textbf{cv2}-(5, 5, 32), \textbf{cv3}-(5, 5, 64) (in the \emph{height-width-channel} order). The number of hidden units in fully-connected layers are 64 (\textbf{fc1}) and 10 (\textbf{fc2}). \textbf{max}-max pooling. \textbf{avg}-average pooling. \textbf{sloss}-softmaxloss. \textbf{hloss}-hrbridloss.}
		\label{tab:str_cifar}
		\begin{tabular}{l|c|c|c|c|c|c|c|c|c|c|c|c|c|c|c} 
			\hline
			Models & \multicolumn{14}{c|}{Layers} & \#Layers \\
			\hline
			Plain & cv1 & - & max &  cv2 & - & avg & cv3 & - & avg & fc1 & ReLU & fc2 & - & sloss & 10 \\
			\hline
			RBF & cv1 & 1-D RBF & max &  cv2 & 1-D RBF & avg & cv3 & 1-D RBF & avg & fc1 & ReLU & fc2 & 1-D RBF & hloss & 14 \\
			\hline
			mReLU & cv1 & mReLU & max &  cv2 & mReLU & avg &  cv3 & mReLU & avg & fc1 & ReLU & fc2 & 1-D RBF & hloss & 14 \\
			\hline
		\end{tabular} 
	\end{center} 
\end{table*}

We present the accuracies and error rates in Fig. \ref{fig:cifar_robustness} and Table \ref{tab:cifar_robustness}. Since the CIFAR-10 images are colored, the perturbation is much easier to perceive. Therefore we use perturbations of smaller magnitude than those for the MNIST. As with the MNIST, the robust CNNs are much more robust than the plain CNN. The best one, mReLU-r-m, makes marginally more errors than the plain CNN for clean samples, and noisy samples of up to $\beta=0.015$; but has greatly superior performance for noisy samples of large perturbation, all adversarial samples and all nonsense samples. However, the improvement on CIFAR-10 is not as large as that on the MNIST. The reason is that there are not sufficient training samples in the CIFAR-10 to train a perfect classifier, thus lots of samples are close to the decision boundaries and easy to perturb to be an adversary. The robustness of plain-r-m against nonsense samples is not as good as with the MNIST, and some curves in Fig. \ref{fig:cifar_robustness}.b seem a bit odd. We leave it to the future work.

Interestingly, the accuracy of mReLU CNN (without random and mean training) on clean samples is 0.194 which is marginally better than that of the plain CNN, 0.217. Although we believe the plain CNN can perform better with advanced training tricks, we hypothesize that there is a connection between the robustness and generalization capability. Thus the CNN using mReLUs can generalize slightly better on the test set.

We show some examples of mReLU-r-m classifications in Fig. \ref{fig:cifar_examples}. The model makes correct predictions on most samples, but makes incorrect predictions for the samples in the red squares which are of the severest perturbation.

\begin{figure*}[htbp]
	\centering
	\begin{tabular}{BBX}
		\includegraphics[width=4.8cm]{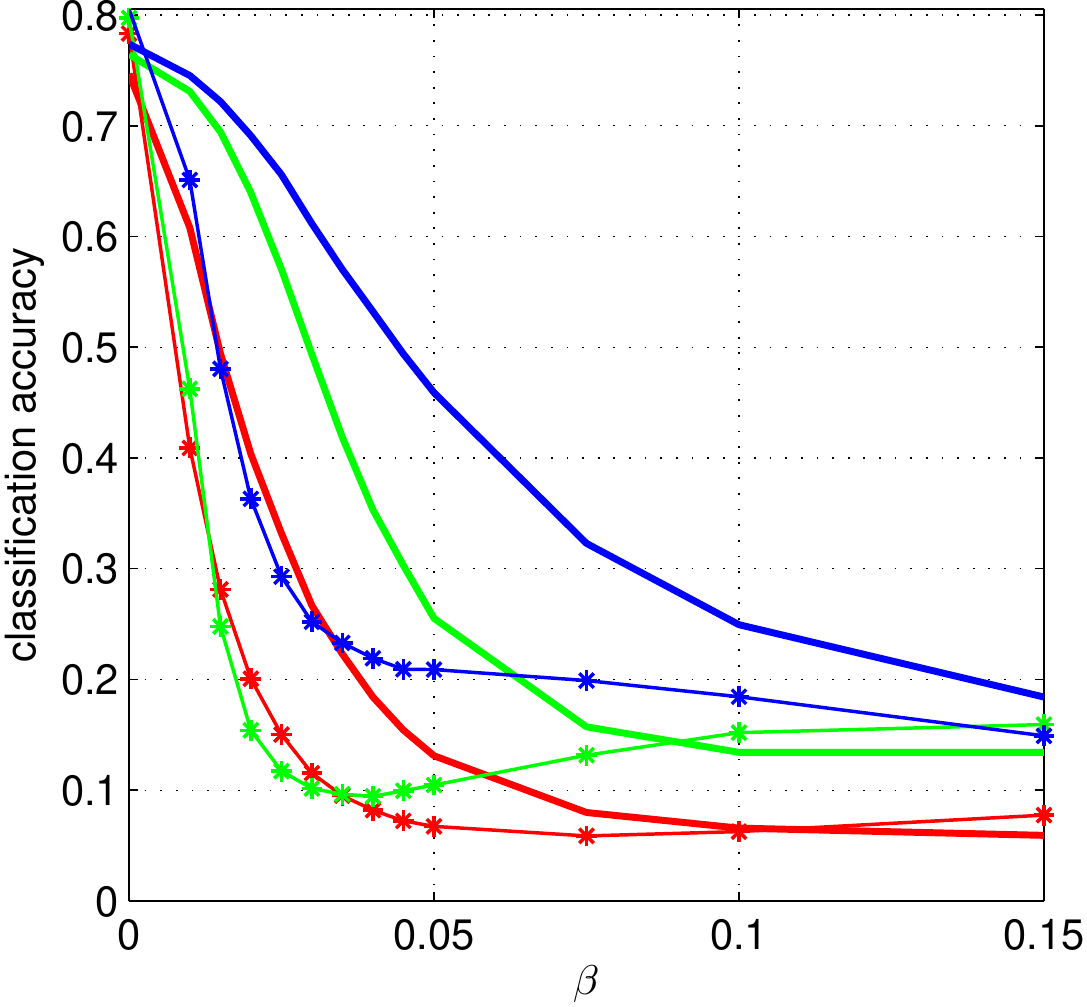} &
		\includegraphics[width=4.8cm]{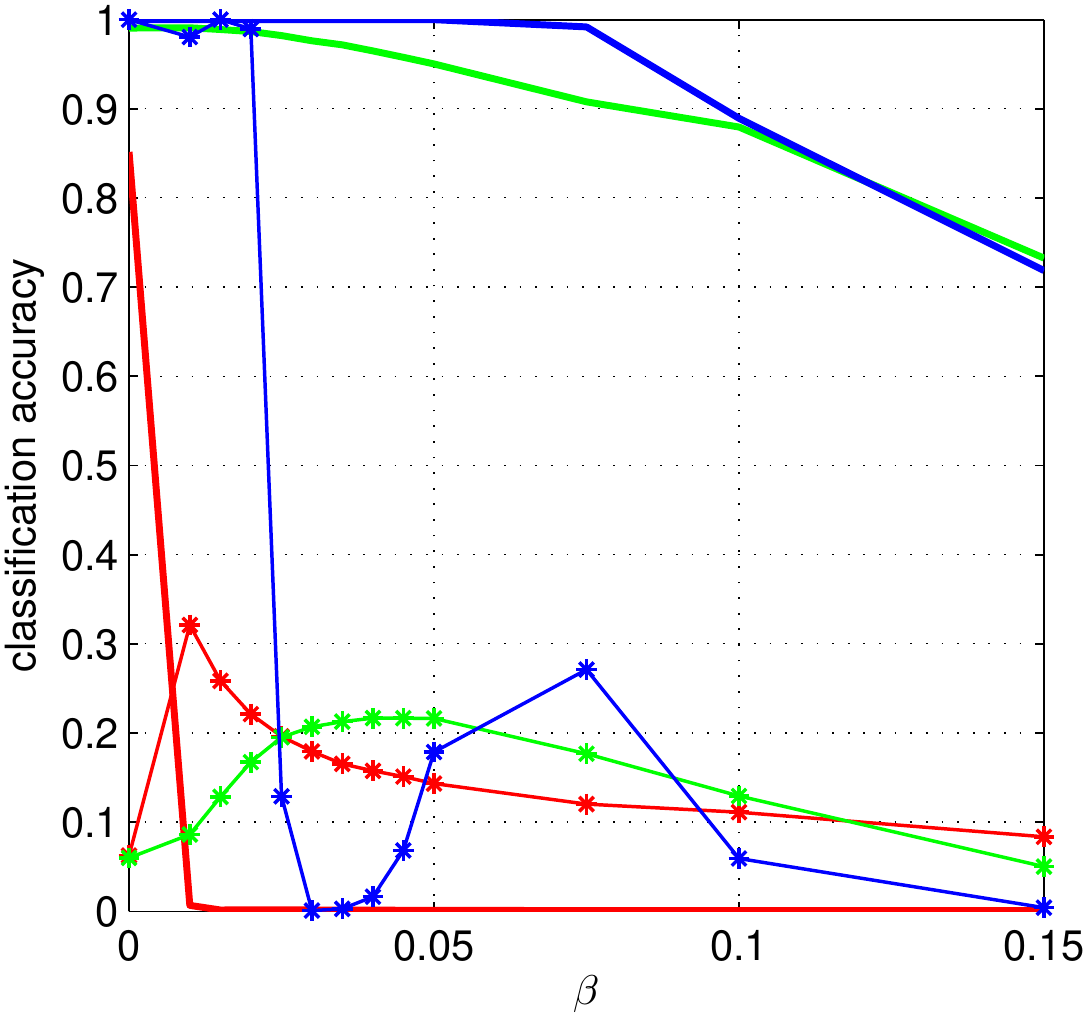} &
		\includegraphics[width=6.5cm]{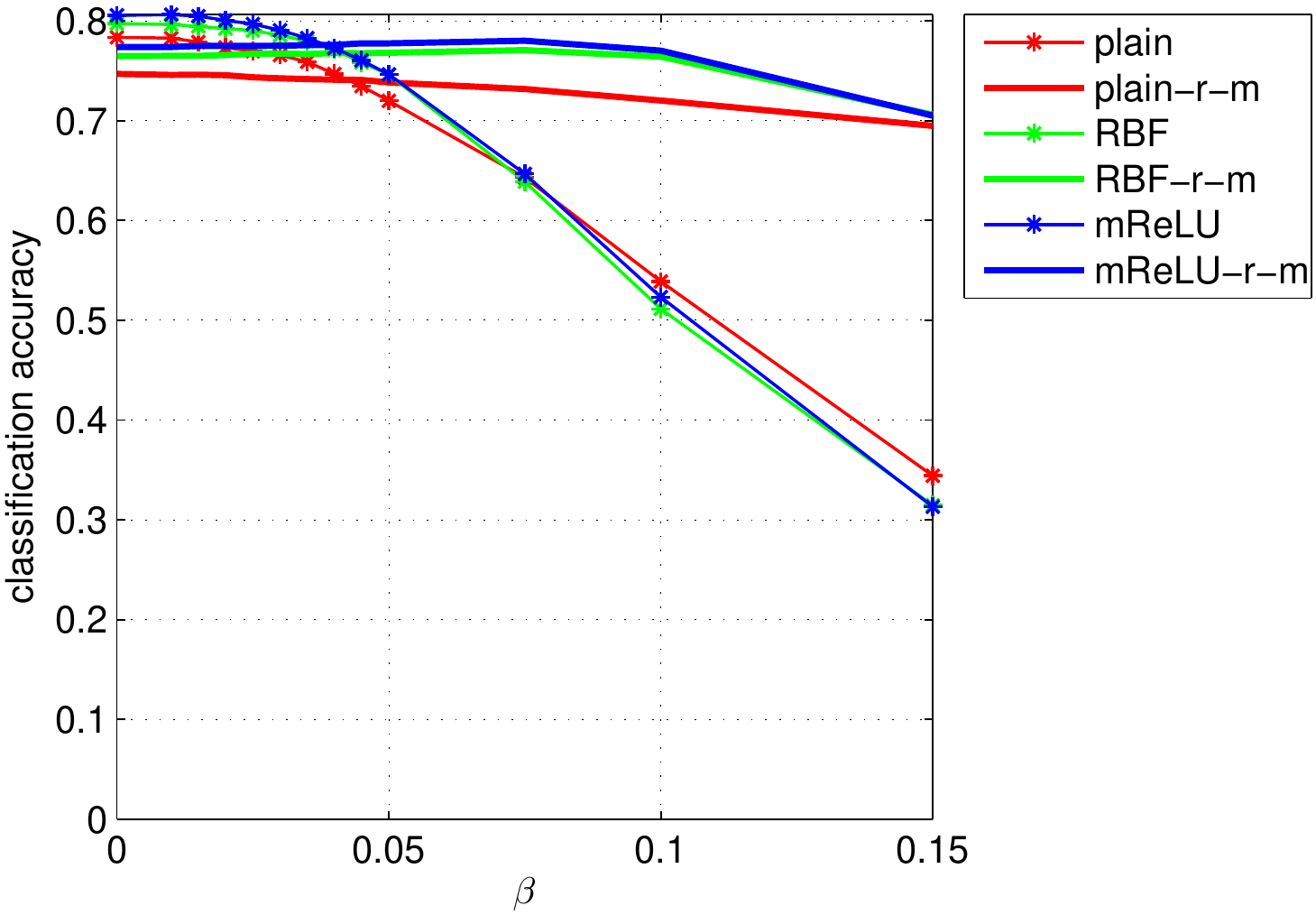} \\
		\ \ \ (a) & \ \ \ (b) & \multicolumn{1}{l}{\ \ \ \ \ \ \ \ \ \ \ \ \ \ \ (c)}
	\end{tabular}
	\caption{Accuracies of the CNNs on the CIFAR-10 test set. (a), (b) and (c) are the accuracies on adversarial, nonsense and noisy samples respectively. The horizontal axes are the perturbation strength $\beta$'s.}
	\label{fig:cifar_robustness}
\end{figure*}

\begin{table*}[htbp]\footnotesize
	\begin{center} 
		\caption{Error rates on the CIFAR-10 test set. The best values are shown in bold.}
		\label{tab:cifar_robustness}
		\begin{tabular}{l|c|r|r|r|r|r|r|r|r|r|r|r|r} 
			\hline
			$\beta$ & clean & 0.010 & 0.015 & 0.020 & 0.025 & 0.030 & 0.035 & 0.040 & 0.045 & 0.050 & 0.075 & 0.100 & 0.150 \\
			\hline
			PSNR & - & 40.00 & 36.48 & 33.98 & 32.04 & 30.46 & 29.12 & 27.96 & 26.94 & 26.02 & 22.50 & 20.00 & 16.48 \\
			\hline
			\multicolumn{14}{l}{} \\ [-2.00 ex]
			\multicolumn{14}{l}{On \textbf{adversarial} samples:} \\
			\multicolumn{14}{l}{} \\ [-2.00 ex]
			\hline
			plain &  \textbf{0.217} & 0.591 & 0.719 & 0.800 & 0.850 & 0.885 & 0.905 & 0.918 & 0.928 & 0.933 & 0.941 & 0.938 & 0.922 \\
			\hline
			mReLU-r-m & 0.226 & \textbf{0.255} & \textbf{0.278} & \textbf{0.309} & \textbf{0.344} & \textbf{0.388} & \textbf{0.430} & \textbf{0.468} & \textbf{0.506} & \textbf{0.541} & \textbf{0.677} & \textbf{0.751} & \textbf{0.816} \\
			\hline
			\multicolumn{14}{l}{} \\ [-2.00 ex]
			\multicolumn{14}{l}{On \textbf{nonsense} samples:} \\
			\multicolumn{14}{l}{} \\ [-2.00 ex]
			\hline
			plain & 0.937 & 0.679 & 0.742 & 0.779 & 0.804 & 0.821 & 0.835 & 0.843 & 0.849 & 0.857 & 0.880 & 0.889 & 0.916 \\
			\hline
			mReLU-r-m & \textbf{0.000} & \textbf{0.000} & \textbf{0.000} & \textbf{0.000} & \textbf{0.000} & \textbf{0.000} & \textbf{0.000} & \textbf{0.000} & \textbf{0.000} & \textbf{0.000} & \textbf{0.008} & \textbf{0.111} & \textbf{0.281} \\
			\hline
			\multicolumn{14}{l}{} \\ [-2.00 ex]
			\multicolumn{14}{l}{On \textbf{noisy} samples:} \\
			\multicolumn{14}{l}{} \\ [-2.00 ex]
			\hline
			plain & \textbf{0.217} & \textbf{0.218} & \textbf{0.221} & 0.227 & 0.231 & 0.234 & 0.241 & 0.253 & 0.266 & 0.280 & 0.357 & 0.461 & 0.656 \\
			\hline
			mReLU-r-m & 0.226 & 0.226 & 0.225 & \textbf{0.225} & \textbf{0.225} & \textbf{0.225} & \textbf{0.225} & \textbf{0.224} & \textbf{0.223} & \textbf{0.223} & \textbf{0.220} & \textbf{0.230} & \textbf{0.295} \\
			\hline
		\end{tabular} 
	\end{center} 
\end{table*}

\begin{figure*}[htbp]\footnotesize
	\centering
	\begin{tabular}{crrrrrrrrrrrc}
		\multicolumn{13}{l}{\includegraphics[width=16.5cm]{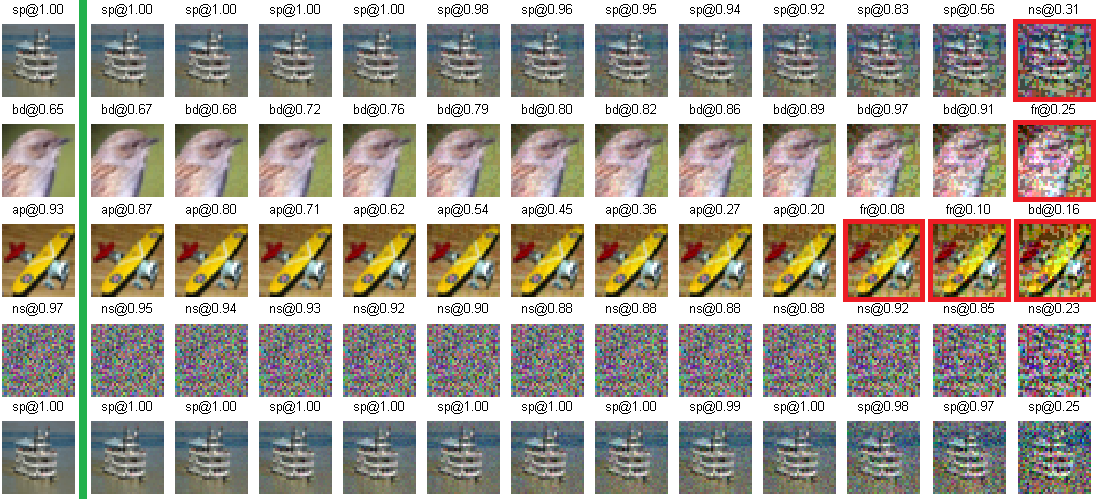}}\\
		\hline
		\multicolumn{13}{l}{Strengths $\beta$}\\
		\ \ \ \ clean & \ \ 0.010 & \ \ 0.015 & \ \ 0.020 & \ \ 0.025 & \ 0.030 & \ \ 0.035 & \ \ 0.040 & \ \ 0.045 & \ 0.050 & \ 0.075 & \ \ 0.100 & \ 0.150 \\
		\hline
		\multicolumn{13}{l}{PSNR}\\
		- & 40.00 & 36.48 & 33.98 & 32.04 & 30.46 & 29.12 & 27.96 & 26.94 & 26.02 & 22.50 & 20.00 & 16.48 \\
		\hline
	\end{tabular}
	\caption{Examples of mReLU-r-m classifications on the CIFAR-10 test set. The prediction and confidence are on the top of individual samples. \textbf{sp}-ship. \textbf{bd}-bird. \textbf{fr}-frog. \textbf{ap}-airplane. \textbf{ns}-nonsense. Rows 1-3: adversarial samples. 4th row: nonsense samples. 5th row: noisy samples.}
	\label{fig:cifar_examples}
\end{figure*}

\section{Conclusions and Discussion}
Our experiments show it is effective to use SAFs to improve the robustness of CNNs. Without substantial changes to the accuracies on clean samples, we obtain remarkably better robustness against adversarial, nonsense and noisy samples. That supports our proposal that SAFs can improve the robustness of CNNs by suppressing unusual signals.

Since we change only the activation functions of CNNs, while leaving their structure, training strategies and optimization methods untouched, it implies there are no severe weaknesses in existing CNN frameworks. Nevertheless, in contrast with popular activation functions including sigmoid and ReLU, the SAFs have less documented support from neuroscience research. Furthermore, it remains a question how to accommodate SAFs better by fine-tuning the CNN structure and training strategies to get the better performance. In that sense, we believe the attempt in this paper is just a start and far from the end.

\nocite{langley00}

\bibliography{robustcnn}
\bibliographystyle{icml2016}

\end{document}